\newcommand{\norm}[1]{\left\lVert#1\right\rVert}
\newcommand{\deter}[1]{\left\lvert#1\right\rvert}
\newcommand{\expect}[2]{\mathbb{E}_{#1}\left[#2\right]}
\newcommand{\eexpect}[3]{\mathbb{E}_{#1}\mathbb{E}_{#2}\left[#3\right]}
\newcommand{\dd}[2]{\mathcal{D}(#1\,\Vert\,#2)}
\newcommand{\dkl}[2]{\mathcal{D}_{\KL}(#1\,\Vert\,#2)}
\newcommand{\dgamma}[2]{\mathcal{D}_{\gamma}(#1\,\Vert\,#2)}
\newcommand{\hgamma}[1]{\mathcal{H}_{\gamma}(#1)}
\newcommand{\cgamma}[2]{\mathcal{C}_{\gamma}(#1, #2)}
\DeclareMathOperator{\diag}{diag}
\DeclareMathOperator{\tr}{tr}
\DeclareMathOperator{\supp}{supp}
\DeclareMathOperator{\KL}{KL}
\DeclareMathOperator{\data}{data}
\DeclareMathOperator*{\argmin}{argmin}
\newtheorem{prop}{Proposition}
\title{$t^3$-Variational Autoencoder:\\ Learning Heavy-tailed Data with Student's t and Power Divergence}
\author{
Juno Kim\textsuperscript{1,2}$^*$\quad Jaehyuk Kwon\textsuperscript{3}$^*$\quad Mincheol Cho\textsuperscript{3}$^*$\quad Hyunjong Lee\textsuperscript{3}\quad Joong-Ho Won\textsuperscript{3}\\
    \textsuperscript{1}Department of Mathematical Informatics, The University of Tokyo\\
    \textsuperscript{2}Center for Advanced Intelligence Project, RIKEN\\
    \textsuperscript{3}Department of Statistics, Seoul National University\\
    {\fontsize{8.4}{10} \texttt{junokim@g.ecc.u-tokyo.ac.jp\quad\{jh19984,code1478,hyunjong526\}@snu.ac.kr\quad wonj@stat.snu.ac.kr}}
}
\begin{document}

\maketitle
\def\thefootnote{*}\footnotetext{Equal contribution.}\def\thefootnote{\arabic{footnote}}

\begin{abstract}
The variational autoencoder (VAE) typically employs a standard normal prior as a regularizer for the probabilistic latent encoder. However, the Gaussian tail often decays too quickly to effectively accommodate the encoded points, failing to preserve crucial structures hidden in the data. In this paper, we explore the use of heavy-tailed models to combat over-regularization. Drawing upon insights from information geometry, we propose $t^3$VAE, a modified VAE framework that incorporates Student's t-distributions for the prior, encoder, and decoder. This results in a joint model distribution of a power form which we argue can better fit real-world datasets. We derive a new objective by reformulating the evidence lower bound as joint optimization of KL divergence between two statistical manifolds and replacing with $\gamma$-power divergence, a natural alternative for power families. $t^3$VAE demonstrates superior generation of low-density regions when trained on heavy-tailed synthetic data. Furthermore, we show that $t^3$VAE significantly outperforms other models on CelebA and imbalanced CIFAR-100 datasets.
\end{abstract}

\section{Introduction}

The variational autoencoder \citep[VAE,][]{kingma2013auto} is a popular probabilistic generative model for learning compact latent data representations. 
The VAE consists of two conditional models: an encoder which models the posterior distribution of the latent variable $z$ given an observation $x$, and a decoder which infers the observation from its latent representation, which are jointly trained by optimizing the evidence lower bound (ELBO). 
The VAE framework \textit{a priori} does not require the prior, encoder or decoder to be a particular probability distribution; the usual choice of Gaussian is mainly due to feasibility of the reparametrization trick and closed-form computation of divergence.

However, real-world data frequently exhibits outlier-heavy or heavy-tailed behavior
which is better captured by models of similar nature. 
Recently, \citet{floto2023the} showed that the Gaussian VAE encodes many points in low-density regions of the prior; the distribution is too tight to effectively fit complex latent representations. We argue that distributing more mass to the tails allows encoded points to spread out easily, leading us to adopt a Student's t-distributed prior. 
We also incorporate a t-distributed decoder which amplifies variability of data generated from low-density regions. 
From a Bayesian perspective, this is equivalent to incorporating a latent precision affecting both $x,z$. Together with the prior, this results in a joint model $p_{\theta,\nu}(x,z)$ of a power form, generalizing the exponential form of the original VAE analogously to how the t-distribution generalizes the Gaussian.

Changing the distributions usually necessitates numerical integration to estimate the ELBO. We provide a novel alternative based on recent theoretical insights from information geometry.
\citet{han2020projections} showed that the ELBO can be reformulated as minimization of KL divergence between two statistical manifolds. Separately, \citet{eguchi2021pythagoras} has developed a theory of $\gamma$-power divergence that parallels KL divergence. In this new geometry, power families play the role of exponential families, providing a natural setting for joint optimization of heavy-tailed models.
By minimizing $\gamma$-power instead of KL divergence, we construct a general-purpose framework implementing t-distributions for the prior, encoder and decoder and a new objective called $\gamma$-loss, which we call the $t^3$VAE.

$t^3$VAE requires a single hyperparameter $\nu$ which is coupled to the degrees of freedom of the t-distributions and controls as a balance between reconstruction and regularization. In particular, $t^3$VAE encompasses the Gaussian VAE and ordinary autoencoder as the limiting cases $\nu\to\infty$ and $\nu\to 2$, respectively. The $\gamma$-loss has an approximate closed form analogous to the ELBO and can be optimized via a t-reparametrization trick. We empirically demonstrate that $t^3$VAE can successfully approximate low-density regions of heavy-tailed datasets. Furthermore, $t^3$VAE is able to learn and generate high-dimensional images in richer detail compared to various alternative VAEs. Finally, we extend our model to a hierarchical architecture, the $t^3$HVAE, which is able to reconstruct high-resolution images with more sophistication.

\subsection{Related Works}

Many authors point out the standard normal prior can induce over-regularization, losing valuable semantics hidden in the data. Alternative priors based on Gaussian mixtures \citep{tomczak2018vae, dilokthanakul2016deep}, the Dirichlet distribution \citep{joo2020dirichlet}, the von Mises-Fisher distribution \citep{davidson2018hyperspherical}, normalizing flows \citep{rezende2015variational} or stochastic processes \citep{goyal2017nonparametric, nalisnick2016stick} have been proposed to mitigate this influence.

Another line of research has focused on modifying the ELBO using different divergences \citep{li2016renyi, deasy2021alphavaes} or weights \citep{higgins2017betavae}. Instead of only changing the KL regularizer, we take advantage of the joint KL divergence formulation for the first time. The $\gamma$-power and similar divergences have been studied before in the robust statistics literature. Some examples are density power divergence \citep{basu1998robust}, logarithmic $\gamma$-divergence \citep{fujisawa2008robust}, and robust variational inference \citep{futami2018variational}.

Some previous works have studied VAEs incorporating the t-distribution, namely the Student-$t$ VAE with a t-decoder \citep{takahashi2018student}, the VAE-st with a t-prior and t-encoder \citep{AbiriNajmeh2020VawS}, and the `disentangled' or DE-VAE with a product of t-priors \citep{mathieu2019disentangling}.
Unlike $t^3$VAE, none of these stray from the ELBO framework, and the
latter two require numerical integration to compute the KL regularizer. We also point out
that these models all use a product of univariate t-distributions while $t^3$VAE uses the
multivariate t-distribution. We implement and compare with these models in our experiments.

Finally, heavy-tailed distributions have also been used as base densities in the normalizing flow literature \citep{alex2020robust, jaini2020tails, laszkiewicz2022marginal, amiri2022generating}, where it has been argued that t-distributions lead to improved robustness and generalization. We take a step further by enforcing a power form on the joint model, which is key to $t^3$VAE's success.

\section{Theoretical Background}\label{sectiontheory}

In this section, we summarize key aspects of the motivating theories of variational inference and information geometry. Details and proofs are deferred to Appendix \ref{appendixproofs}.

\subsection{VAE as Joint Minimization}

Formally, a VAE models the distribution $p_{\data}(x)$ of the observed variable $x\in\mathbb{R}^n$ by jointly learning a stochastic latent variable $z\in\mathbb{R}^m$. Generation is performed by sampling $z$ from the prior $p_Z(z)$, then sampling $x$ according to a probabilistic decoder $p_\theta(x|z)$ parametrized by $\theta\in\Theta$. The observed likelihood $p_\theta(x)=\int p_\theta(x|z)p_Z(z)dz$ is intractable, so we instead aim to approximate the posterior $p_{\theta}(z|x)$ with a parametrized encoder $q_\phi(z|x)$ by minimizing their KL divergence. This leads to maximizing the \textit{evidence lower bound} (ELBO) of the log-likelihood, defined as
\begin{align}
    \label{ELBOdef1}
	\mathcal{L}(x;\theta,\phi):&= \log p_{\theta}(x) - \dkl{q_{\phi}(z|x)}{p_{\theta}(z|x)} \\
    \label{ELBOdef2}
	&= \expect{z\sim q_{\phi}(\cdot|x)}{\log p_{\theta}(x|z)} - \dkl{q_{\phi}(z|x)}{p_Z(z)}.
\end{align}
Since $-\expect{z\sim q_{\phi}(\cdot|x)}{\log p_{\theta}(x|z)}$ is the cross-entropy reconstruction loss of the original data, the ELBO can be understood as adding a KL regularization term, forcing $q_\phi(z|x)$ close to the prior in order to ensure that the latent state $z$ encodes only useful information.
In practice, the prior is typically standard normal and the encoder and decoder distributions are parametrized Gaussians,
\begin{equation}\label{VAEdistribution}
p_Z(z)\sim \mathcal{N}_m(0,I), \quad q_\phi(z|x) \sim \mathcal{N}_m(\mu_\phi(x), \Sigma_\phi(x)),\quad
p_\theta(x |z)\sim \mathcal{N}_n(\mu_\theta(z), \sigma^2 I)
\end{equation}
in which case the reconstruction loss is equal to the mean squared error (MSE) between $x$ and the decoded mean up to a constant,\\[-5pt]
\begin{equation}\label{MSEloss}
\textstyle\expect{z\sim q_{\phi}(\cdot|x)}{\log p_{\theta}(x|z)} = \expect{z\sim q_{\phi}(\cdot|x)}{-\frac{1}{2\sigma^2}\norm{x-\mu_\theta(z)}^2} +\text{const.}
\end{equation}
The encoder covariance is usually taken as (but not assumed to be) diagonal, $\Sigma_\phi(x)=\diag \sigma_\phi^2(x)$.

\citet{han2020projections} point out that the VAE can be reinterpreted as a joint minimization process between two statistical manifolds. Let $\mathcal{P}=\{p_\theta(x,z)= p_\theta(x|z)p_Z(z):\theta\in\Theta\}$ be the \textit{model distribution manifold}, and $\mathcal{Q}:=\{q_\phi(x,z)=p_{\data}(x)q_\phi(z|x):\phi\in \Phi\}$ the \textit{data distribution manifold}, both finite-dimensional submanifolds of the space of joint distributions. The divergence between points on $\mathcal{P}, \mathcal{Q}$ may be recast (Appendix \ref{appendixelbo}) as
\begin{equation}\label{jointdkl}
\dkl{q_\phi(x,z)}{p_\theta(x,z)} = -\expect{x\sim p_{\data}}{\mathcal{L}(x;\theta,\phi)} -H(p_{\data}),
\end{equation}
where $H(p_{\data})$ denotes the differential entropy of the data distribution. Thus, maximizing the ELBO is equivalent to finding the joint minimizer
\begin{equation}\label{jointklproblem}
(p_{\theta^*},q_{\phi^*})=\argmin_{p\in \mathcal{P},\; q\in \mathcal{Q}} \dkl{q}{p}.
\end{equation}

\subsection{Information Geometry and $\gamma$-power Divergence}

Any divergence $\mathcal D$ on a statistical manifold $\mathcal{M}$ of probability distributions automatically induces a Riemannian metric $g$ and two affine connections $\Gamma, \Gamma^*$ on $\mathcal{M}$ dually coupled with respect to $g$ \citep[Chapter~6]{amari2016information}. For KL divergence, $g$ is the Fisher information metric and the $\Gamma$ and $\Gamma^*$-autoparallel curves connecting two points $p,q$ are the $m$-(mixture) geodesic $p_t(x)=(1-t)p(x)+tq(x)$ and the $e$-(exponential) geodesic $p_t(x)\propto \exp((1-t)\log p(x)+t\log q(x))$, respectively. In particular, the $\Gamma^*$-flat submanifolds consist of the exponential families.
The problem \eqref{jointklproblem} may then be solved by applying the information geometric $em$-algorithm \citep{csiszar1984information, han2020projections}.




Paralleling the KL case, \citet{eguchi2021pythagoras} defines the $\gamma$-power entropy and cross-entropy for probability distributions $q,p$ with power $\gamma$ as\\[-5pt]
\begin{equation}\label{cgammadef}
\hgamma{p}:=-\norm{p}_{1+\gamma}=-\left( \int p(x)^{1+\gamma}dx \right)^{\frac{1}{1+\gamma}}, \quad \cgamma{q}{p}:= -\int q(x) \bigg(\frac{p(x)}{\norm{p}_{1+\gamma}}\bigg)^\gamma dx
\end{equation}
and the $\gamma$-power divergence as $\dgamma{q}{p}=\cgamma{q}{p}-\hgamma{q}$. In our paper, we introduce an additional factor of $1/\gamma$ in order to extend to the case $-1<\gamma<0$,\footnote{\citet{eguchi2021pythagoras} also points out that the t-distribution naturally emerges as the maximal entropy distribution when $\gamma=-\frac{2}{\nu+1}$, see Proposition \ref{propmaxent}. However, his original definition of $\gamma$-power divergence is erroneously non-negative only for $\gamma>0$, necessitating our modification.}
\begin{equation}\label{dgammadef}
\dgamma{q}{p}:=\gamma^{-1}\cgamma{q}{p}-\gamma^{-1}\hgamma{q}.
\end{equation}
We show that $\mathcal{D}_\gamma$ is indeed a divergence and derive its induced metric in Appendix \ref{appendixinfgeom}. Computing the dual connections yields that the totally $\Gamma^*$-geodesic (or $\gamma$-flat) submanifolds consist of power families of the form
\begin{equation}\label{gammaflat}
\mathcal{S}_\gamma=\{p_\theta(x)\propto(1+\gamma\theta^\top s(x))^{\frac{1}{\gamma}}: \theta\in\Theta\}.
\end{equation}
In particular, the family of $d$-variate Student's t-distributions with variable mean $\mu$, scale matrix $\Sigma$ and fixed degrees of freedom $\nu$\\[-5pt]
\begin{equation}
t_p(x|\mu,\Sigma,\nu) = C_{\nu,d}\deter{\Sigma}^{-\frac{1}{2}} \left( 1 + \frac{1}{\nu} (x - \mu)^\top \Sigma^{-1} (x - \mu)\right)^{-\frac{\nu+d}{2}},\quad C_{\nu,d}=\frac{\Gamma(\frac{\nu+d}{2})}{\Gamma(\frac{\nu}{2})(\nu\pi)^{\frac{d}{2}}}
\end{equation}
is $\gamma$-flat when $\gamma = -\frac{2}{\nu+d}$, which we assume in the remainder of this Section. Furthermore, the $\gamma$-power divergence from $q_\nu\sim t_d(\mu_0,\Sigma_0,\nu)$ to $p_\nu\sim t_d(\mu_1,\Sigma_1,\nu)$ is finite when $\nu>2$ and can be expressed in closed-form (Proposition \ref{propdgammatt}) as
\begin{align}\label{dgammatt}
\begin{split}
\textstyle\dgamma{q_\nu}{p_\nu} &=\textstyle
-\frac{1}{\gamma} C_{\nu, d}^\frac{\gamma}{1+\gamma}
    \left(
        1 + \frac{d}{\nu-2}
    \right)^{-\frac{\gamma}{1+\gamma}} \Bigl[\,
-\deter{\Sigma_0}^{-\frac{\gamma}{2(1+\gamma)}}
        \left(
            1 + \frac{d}{\nu-2}
        \right) \\
        &\textstyle\quad +\deter{\Sigma_1}^{-\frac{\gamma}{2(1+\gamma)}}
        \left(
             1 + \frac{1}{\nu - 2}\tr \left( \Sigma_1^{-1}\Sigma_0
            \right) +
            \frac{1}{\nu} (\mu_0 - \mu_1)^\top\Sigma_1^{-1} (\mu_0 - \mu_1)
        \right)
    \Bigr].
\end{split}
\end{align}
KL divergence may be retrieved from $\gamma$-power divergence as $\lim_{\gamma\to 0}\dgamma{q}{p}=\dkl{q}{p}$, see Proposition \ref{proplimit}. Moreover, Equation \eqref{dgammatt} converges to the KL divergence between the limiting Gaussian distributions $q_\infty\sim\mathcal{N}_d(\mu_0,\Sigma_0)$ and  $p_\infty\sim\mathcal{N}_d(\mu_1,\Sigma_1)$ as $\nu\to\infty$.

\section{The $t^3$-Variational Autoencoder}\label{sectionmodel}

\subsection{Structure of the $t^3$VAE}

Throughout this section, we present definitions and theoretical properties of our $t^3$VAE model and $\gamma$-loss function. Full derivations are provided in Appendix \ref{appendixcomputations}. When the prior, encoder and decoder are normally distributed as in Equation \eqref{VAEdistribution}, the joint model distribution $p_\theta(x,z)\in\mathcal{P}$ takes the form
\begin{equation}\label{VAEmodeldist}
    p_\theta(x,z) \propto
    \sigma^{-n}
    \exp \left[
        -\frac{1}{2} \left(
            \norm{z}^2 + 
            \frac{1}{\sigma^2}
            \norm{x - \mu_\theta(z)}^2
        \right)
    \right].
\end{equation}
Since the tail is exponentially bounded, its capacity to approximate the data distribution manifold corresponding to real-world data is limited. To mitigate this problem, we propose a heavy-tailed model $p_{\theta, \nu}(x,z)$ of a power form, parametrized by the degrees of freedom $\nu > 2$,
\begin{equation}\label{ttVAEmodeldist}
    p_{\theta, \nu}(x,z) \propto
    \sigma^{-n} \left[
        1 + \frac{1}{\nu} \left(
            \norm{z}^2 + 
            \frac{1}{\sigma^2}
            \norm{x - \mu_\theta(z)}^2
        \right)
    \right]^{-\frac{\nu +m+n}{2}}, 
\end{equation}
which is obtained from the prior and decoder distributions
\begin{align}
    \textstyle p_{Z, \nu}(z) & = \label{prior}
    t_m \left(
        z | 0, I, \nu
    \right), 
    \\
    p_{\theta, \nu} (x|z) & = \label{decoder}
    \textstyle t_n \left(
        x \left|
        \mu_\theta(z), 
        \frac{1 + \nu^{-1}\norm{z}^2}{1 + \nu^{-1}m} \sigma^2 I, 
        \nu + m
    \right.\right). 
\end{align}
Since the true posterior $z|x$ is t-distributed with degrees of freedom $\nu+n$ when the decoder is shallow (discussed in Appendix \ref{appendixshallow}), we are also motivated to incorporate a t-distributed encoder
\begin{equation} \label{encoder}
    q_{\phi, \nu} (z|x) =
    t_m \left(
        z \left|
        \mu_\phi(x), 
        (1 + \nu^{-1}n)^{-1} \Sigma_\phi(x), 
        \nu+n
    \right.\right).
\end{equation}
Hence, the $t^3$VAE model and data distribution manifolds are $\mathcal{P}_{\nu} = \{p_{\theta, \nu}(x,z)= p_{\theta, \nu}(x|z)p_{Z, \nu}(z) : \theta\in\Theta\}$ and $\mathcal{Q}_\nu = \{q_{\phi, \nu}(x,z) = p_{\data}(x)q_{\phi, \nu}(z|x) : \phi\in\Phi\}$. This generalizes the Gaussian VAE in the sense that $p_{\theta,\nu}(x,z)$ and $q_{\phi,\nu}(\cdot|x)$ uniformly converge to $p_\theta(x,z)$ and $q_\phi(\cdot|x)$ as $\nu \rightarrow \infty$.

\subsection{$\gamma$-power Divergence Loss}\label{sectiongammaloss}
From the geometric relationship of $\gamma$-power divergence and power families and the model equation \eqref{ttVAEmodeldist}, we are motivated to replace the KL objective in the joint minimization problem \eqref{jointklproblem} with $\gamma$-power divergence for $t^3$VAE,
\begin{equation}\label{jointgammaproblem}
(p_{\theta^*,\nu},q_{\phi^*,\nu})=\argmin_{p\in \mathcal{P_\nu},\; q\in \mathcal{Q_\nu}} \dgamma{q}{p}
\end{equation}
where $\gamma$ is coupled to $\nu$ as $\gamma = -\frac{2}{\nu+n+m}$. The $\gamma$-power divergence from $q_{\phi,\nu}\in\mathcal{Q}_\nu$ to $p_{\theta,\nu}\in\mathcal{P}_\nu$ can be computed in closed-form after an approximation of order $\gamma^2$ (Proposition \ref{propfirstorder}). After rearranging constants, we obtain the \textbf{$\mathbf{\gamma}$-loss} objective which is amenable to Monte Carlo estimation,
\begin{equation}\label{gammaloss}
\begin{split}
    \mathcal{L}_{\gamma}(\theta, \phi) &= \frac{1}{2}\,
    \mathbb{E}_{x \sim p_{\data}} \biggl[
        \frac{1}{\sigma^2}
        \mathbb{E}_{z \sim q_{\phi, \nu}(\cdot|x)}
        \norm{x - \mu_\theta(z)}^2
    \\
    &\qquad + 
        \norm{\mu_\phi(x)}^2 + 
        \frac{\nu}{\nu+n - 2} \tr\Sigma_\phi(x) - 
        \frac{\nu C_1}{C_2}
        \deter{\Sigma_\phi(x)}^{-\frac{\gamma}{2(1+\gamma)}} \biggr]
\end{split}
\end{equation}
for constants $C_1=\left(\frac{\nu+m+n-2}{\nu + n - 2}\left(1 + \frac{n}{\nu}\right)^{\frac{\gamma m}{2}} C_{\nu+n, m}^\gamma\right)^\frac{1}{1+\gamma}$ and $C_2=\left(\frac{\nu+m+n-2}{\nu - 2} \sigma^n C_{\nu,m+n}^{-1}\right)^{-\frac{\gamma}{1+\gamma}}$. See Appendix \ref{appendixloss} for proofs and analysis of error.

The $\gamma$-loss does not lower bound the data likelihood; its precise role will be made clear shortly. We emphasize that our framework is to view the ELBO \emph{not} as a bound of likelihood (which leads to modifying the KL regularizer), but as a divergence between joint distributions (leads to modifying the entire divergence). This change is further justified in \citet{fujisawa2008robust} which shows that $\gamma$-type divergence minimizers are asymptotically efficient M-estimators, lending support to our approach as a solid alternative to maximum likelihood methods for statistical inference.

\subsection{$\nu$ Controls Regularization Strength}\label{sectionregularizer}

We now shine light on the meaning of the $t^3$VAE hyperparameter $\nu$. Analogously to the ELBO, the $\gamma$-loss \eqref{gammaloss} consists of an MSE reconstruction error and additional terms which act as a regularizer. This can be made precise: in fact, the remaining terms are equivalent (up to constants) to the $\gamma$-power divergence from the posterior $q_{\phi, \nu}(z|x)$ to the \emph{alternative} prior
\begin{equation}\label{regularizertau}
\textstyle p_\nu^\star(z)=   t_m \left(
        z|0,\tau^2 
        I, \nu + n
    \right);\quad \tau^2 = \frac{1}{1+\nu^{-1}n}\left(
        \sigma^{-n} C_{\nu, n}
        (
            1 + \frac{n}{\nu-2}
        )^{-1}
    \right)^\frac{2}{\nu+n - 2},
\end{equation}
derived in Appendix \ref{appendixtau}. Equation \eqref{gammaloss} can then be rewritten as
\begin{equation}\label{gammalossnew}
\textstyle \mathcal{L}_{\gamma}(\theta, \phi) =
    \mathbb{E}_{x \sim p_{\data}} \left[
        \frac{1}{2\sigma^2}
        \mathbb{E}_{z \sim q_{\phi, \nu}(\cdot|x)}
        \norm{x - \mu_\theta(z)}^2 +\alpha \dgamma{q_{\phi,\nu}}{p_\nu^\star}\right]+\text{const.}
\end{equation}
Hence, $\gamma$-loss can be interpreted similarly as a balance between reconstruction and regularization, and $\nu$ controls both the target scale $\tau^2$ and the regularizer coefficient $\alpha=-\frac{\gamma\nu}{2C_2}$.

Figure \ref{fig:regfig}(a) plots the regularizer as a function of $\Sigma_\phi(x)$ for a range of $\nu$ when $m=n=1$. The KL graph verifies that $\Sigma_\phi(x)$ is forced towards $\tau^2=1$ in the Gaussian case, and this effect is particularly strong when $\Sigma_\phi(x)<\tau^2$. For smaller degrees of freedom, $\tau$ becomes less than 1 and the forcing effect for small $\Sigma_\phi(x)$ decreases as well, allowing the encoded distributions to be more point-like. The overall weakening of regularization is consistent with our motivations for adopting a heavy-tailed prior. This also separates $t^3$VAE from models which simply assign a weight to the KL regularizer such as $\beta$-VAE \citep{higgins2017betavae}, where the target $\tau^2$ remains constant.

Figure \ref{fig:regfig}(b),(c) further show the dependency of $\tau,\alpha$ against $\nu$ for $n\in\{1,4,16,64\}$. As $\nu\to \infty$, $t^3$VAE converges to the Gaussian VAE. As $\nu\to 2$, in theory both $\tau,\alpha\to 0$ so that regularization vanishes and $t^3$VAE regresses to a raw autoencoder.
However, this does not occur in practice in high dimensions as the model is much less sensitive to $\nu$ near $2$ due to the slow convergence rate. In low dimensions, this regime does come into play and $t^3$VAE performs significantly worse if $\nu$ is very small. We therefore suggest selecting $\nu-2$ to be roughly logarithmically spaced to observe varying levels of regularization strength, and only a few values are needed for higher dimensional datasets.

\begin{figure}
    \centering
    \includegraphics[width = 0.90\textwidth]{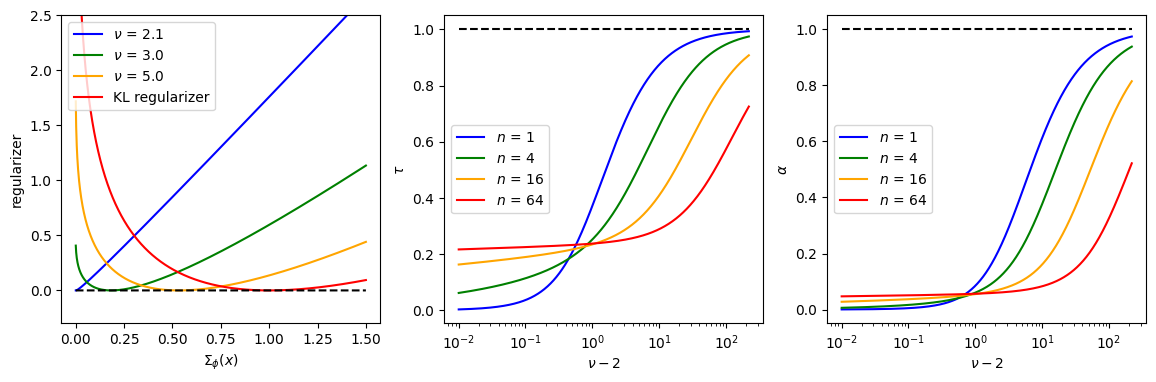}
    \caption{(a) Dependency of regularization on $\Sigma_\phi(x)$ when $m=n=1$, $\sigma=1$ (left); (b) graph of the alternative prior scale $\tau$ against $\nu$ (middle), (c) graph of the regularizer coefficient $\alpha$ against $\nu$ (right).}\label{fig:regfig}
\end{figure}

\subsection{$t^3$HVAE: the Bayesian Hierarchy}\label{sectionbayes}

The principle of $t^3$VAE naturally extends to hierarchical VAEs (HVAEs), which have recently shown great success in generating high-quality images and learning structured features \citep{vahdat2020nvae,child2021very,havtorn2021hier}. In Appendix \ref{appendixjoint}, we prove $t^3$VAE may be viewed as a Bayesian model by introducing a \textit{latent precision} $\lambda$ affecting both $x$ and $z$\footnote{The factor $\frac{1 + \nu^{-1}\norm z^2}{1 + \nu^{-1}m}$in the t-decoder \eqref{decoder} may hence be understood as incorporating information gained on the latent precision when $z$ is observed. For tail region values encoded with large magnitude, one infers a smaller precision and thus increased variance for the output.}:
\begin{equation}\label{bayesian}
    \lambda \sim \chi^2(\lambda\vert\nu),\quad
    z | \lambda \sim
    \mathcal N_m
    ( 
        z 
        \vert
        0, 
        \nu\lambda^{-1} I
    ),
    \quad x | z,\lambda \sim
    \mathcal N_n(
        x
        \vert
        \mu_\theta(z), 
        \nu\lambda^{-1}
        \sigma^2 I
    ). 
\end{equation}
It is then straightforward to add any number of latent layers $z_i|z_{<i},\lambda\sim \mathcal{N}_{m_i}(z_i|\mu_\theta(z_{<i}), \nu\lambda^{-1}\sigma_i^2I)$ to obtain a heavy-tailed hierarchical prior $(z_1,\cdots,z_L)$.
In Appendix \ref{appendixhier}, we develop a two-level hierarchical $t^3$VAE or $t^3$HVAE, where the priors $z_1$ and $z_2|z_1$, decoder $x|(z_1,z_2)$ and encoders $z_1|x$ and $z_2|(x,z_1)$ are all t-distributed. We also rederive the $\gamma$-loss from the information geometric formulation \eqref{jointgammaproblem}, showcasing the applicability of our approach.\\[-24pt]
\begin{spacing}{1.15}
\begin{algorithm}[t]
\DontPrintSemicolon
\caption{Overview of $t^3$VAE}\label{pseudocode}
\begin{multicols}{2}
    \textbf{Require} data $x^{(1)},\cdots,x^{(K)}$, hyperparameter $\nu$\;
    compute $\gamma,\tau,C_1,C_2$\;
    \For{$i \in \{1, \dots, K\}$}{
    retrieve $\mu_\phi(x^{(i)}), \Sigma_\phi(x^{(i)})$\; 
    sample $z^{(i)} \sim q_{\phi, \nu}(z|x^{(i)})$\Comment*[r]{\eqref{encoder}}
    retrieve $\mu_\theta(z^{(i)})$\;
    sample $x_{\text{recon}}^{(i)}  \sim p_{\theta, \nu}(x|z^{(i)})$\Comment*[r]{\eqref{decoder}}
    }
    \If{\textsc{Training Phase}}{
    Take gradient step with $\nabla_{\phi,\theta} \mathcal{L}_{\gamma}(\phi, \theta)$\;
    using t-reparametrization trick \Comment*[r]{\eqref{t-repara}}
    }
    \If{\textsc{Generation Phase}}{
    sample $z_\text{gen} \sim p_{\nu}^{\star}(z)$\Comment*[r]{\eqref{regularizertau}}
    retrieve $\mu_\theta(z_\text{gen})$\;
    sample $x_\text{gen} \sim p_{\theta, \nu}(x|z_\text{gen})$\Comment*[r]{\eqref{decoder}}
    }
\end{multicols}
\vspace{6pt}
\end{algorithm}
\end{spacing}
\section{Experiments}

In this section, we explore the empirical advantages that $t^3$VAE offers over the Gaussian VAE and other alternative models by analyzing various aspects of performance on synthetic and real datasets. A summary of our framework is provided in Algorithm \ref{pseudocode} to assist with implementation. Experimental details including the reparametrization trick for $t^3$VAE, network architectures and further results are documented in Appendix \ref{appendixexperiments}. 

\subsection{Learning Heavy-tailed Bimodal Distributions}\label{subsectionsimul}

\begin{figure}
    \centering
    \includegraphics[width = 0.99\textwidth]{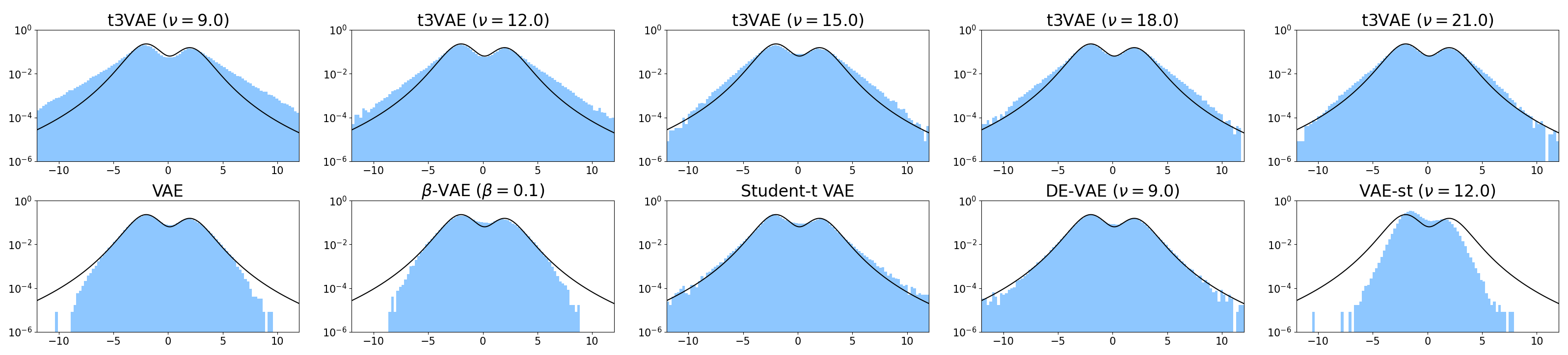}
    \caption{Log-histograms of samples generated from $t^3$VAE ($\nu = 9,12,15,18,21$), Gaussian VAE, $\beta$-VAE ($\beta=0.1$), Student-$t$ VAE, DE-VAE and VAE-st. Solid lines illustrate the true density $p_\text{heavy}$.}
    \label{fig:heavy_mixture}
\end{figure}

\paragraph{Univariate dataset.} We begin by analyzing generative performance on a univariate heavy-tailed dataset. We compare $t^3$VAE with $\nu\in\{9,12,15,18, 21\}$, Gaussian VAE and $\beta$-VAE, as well as other t-based models: Student-$t$ VAE \citep{takahashi2018student}, DE-VAE \citep{mathieu2019disentangling} and VAE-st \citep{AbiriNajmeh2020VawS}. Each model is trained on 200K samples from the bimodal density 
\begin{equation} \label{syntheticdist}
    p_\text{heavy}(x) = 0.6 \times t_1 (x | -2, 1^2, 5) + 0.4 \times t_1 (x|2, 1^2, 5).
\end{equation} 
We then generate 500K samples from each of the trained models and compare to $p_\text{heavy}$, illustrated in Figure \ref{fig:heavy_mixture}. We plot log-scale histograms in order to capture behavior in low-density regions. The Gaussian and $\beta$-VAE completely fail to produce reliable tail samples, in particular generating none beyond the range of $\pm 10$. In contrast, t-based models (with the exception of VAE-st) are capable of learning a much wider range of tail densities, establishing the efficacy of heavy-tailed models. For $t^3$VAE, we observe the best performance for $\nu\approx 20$; smaller $\nu$ leads to tail overestimation, while further increasing $\nu$ loses mass and ultimately converges to the Gaussian case as $\nu\to\infty$.

For a quantitative comparison, we apply the maximum mean discrepancy (MMD) test \citep{gretton2012kernel} to evaluate whether the generated distributions are distinguishable from the original. As the test is insensitive to tail behavior, we also truncate the data by removing all points with absolute value less than 6 and apply the MMD test to the resulting tails. Table \ref{table:MMD}(a) shows $p$-values for the full dataset, left ($x<-6$) and right tails ($x>6$). Testing with the full dataset fails to reject most models; restricted to low-density regions, however, MMD testing completely rejects the Gaussian VAE and tuned $\beta$-VAE. In contrast, our model output is indistinguishable from $p_\text{heavy}$ for moderate values of $\nu$.

\paragraph{Bivariate dataset.} To distinguish $t^3$VAE from other t-based models, we further design a bivariate heavy-tailed dataset. We generate $200\mathrm{K}$ samples $x_i$ from $p_\text{heavy}(x) = 0.7 \times t_1 (x |\, -2, 2^2, 5) + 0.3 \times t_1 (x |\, 2, 2^2, 5)$ and take $y_i = x_i + 2\sin \left(\frac{\pi}{4} x_i\right)$ with noise distributed as $t_2((0,0)^\top, I_2, 6)$. We then run the MMD test for the full data and the left ($x^2 + y^2 > 10^2, x < 0$) and right ($x^2 + y^2 > 10^2, x > 0$) tail regions. The results are presented in Table \ref{table:MMD}(b). All models approximate the true distribution well in high-density regions; however, for low-density generation, the null hypothesis is rejected in every case except for $t^3$VAE, demonstrating the utility of implementing the \emph{multivariate} t-distribution.


\begin{table}[t]
\caption{$p$-values for the MMD test for (a) univariate and (b) bivariate synthetic heavy-tailed data. Rejected values are shown in red. Hyperparameters are tuned separately for each model and the best versions are reported; see Tables \ref{table:1d_MMDpvalue_full} and \ref{table:2d_MMDpvalue_full} for the full data.}
\label{table:MMD}
\begin{subtable}{\textwidth}
\centering
\resizebox{\textwidth}{!}{
\begin{tabular}{lrrrlrrr}
    \toprule[1.5pt]
    Model & Full & Left tail & Right tail & Model & Full & Left tail & Right tail
    \\
    \midrule[1pt]
    $t^3$VAE ($\nu=18$) & $0.322$ & $0.377$ & $0.693$ & 
    Student $t$-VAE & $0.587$ & $0.291$ & $0.114$\\
    VAE & $0.514$    & \textcolor{red}{$0.036$} & \textcolor{red}{$0.003$} & 
    DE-VAE ($\nu = 9$) & $0.943$ & $0.424$ & $0.814 $\\
    $\beta$-VAE ($\beta = 0.1$) & $0.614$    & \textcolor{red}{$0.011$} & \textcolor{red}{$<0.001$} & 
    VAE-st ($\nu = 12$) & $\textcolor{red}{<0.001}$ & $0.953$ & $0.643$\\
    \bottomrule[1.5pt]
\end{tabular}
}
\label{table:1d_MMDpvalue}
\subcaption{MMD test $p$-values for univariate data.}
\end{subtable}
\begin{subtable}{\textwidth}
\centering
\resizebox{\textwidth}{!}{
\begin{tabular}{lrrrlrrr}
    \toprule[1.5pt]
    Model & Full & Left tail & Right tail & Model & Full & Left tail & Right tail
    \\
    \midrule[1pt]
    $t^3$VAE ($\nu=30$) & $0.276$ & $0.214$ & $0.213$ & 
    Student $t$-VAE & $0.530$ & \textcolor{red}{$<0.001$} & \textcolor{red}{$<0.001$}\\
    VAE & $0.116$    & \textcolor{red}{$0.004$} & \textcolor{red}{$<0.001$} & 
    DE-VAE ($\nu = 3$) & $0.624$ & \textcolor{red}{$0.002$} & $0.057$\\
    $\beta$-VAE ($\beta = 0.5$) & $0.251$    & \textcolor{red}{$0.011$} & \textcolor{red}{$<0.001$} & 
    VAE-st ($\nu = 3$) & $0.485$ & \textcolor{red}{$<0.001$} & \textcolor{red}{$<0.001$}\\
    \bottomrule[1.5pt]
\end{tabular}
}
\label{table:2d_MMDpvalue}
\subcaption{MMD test $p$-values for bivariate data.}
\end{subtable}
\end{table}

\subsection{Learning High-dimensional Images}

We now showcase the effectiveness of our model on high-dimensional data via both reconstruction and generation tasks in CelebA \citep{liu2015faceattributes, liu2018large}.
The Fr\'{e}chet inception distance (FID) score \citep{heusel2017gans} is employed to evaluate image quality. In order to comprehensively establish superiority, we also compare against a wide range of recent alternative VAE models.

A natural question to ask is whether heavy-tailedness is indeed the factor contributing to effective latent encoding. We address these concerns by comparing with simpler alternatives: a Gaussian prior with increased variance $\mathcal{N}_m(0,\kappa^2I)$ for $\kappa>1$, and $\beta$-VAE with regularizer weighting $\beta<1$.
We also include Student-$t$ VAE and DE-VAE as an ablation study of t-based model components. In addition, we implement two strong models which address latent structural collapse from different perspectives. The Tilted VAE \citep{floto2023the}, whose prior is designed to contain more mass in high-density rather than low-density regions; and FactorVAE \citep{kim2018disentangling}, to assess the effect of disentanglement. Hyperparameters of all models are tuned separately for each experiment.

\begin{table}[!ht]
    \caption{Reconstruction FID scores of CelebA and CIFAR100-LT. In CelebA, both overall scores and selected classes are shown. Bald, Mustache (Mst), and Gray hair (Gray) are rare classes (less than 5\% of the total), while No beard (No Bd) is common (over 50\%). In CIFAR100-LT, FID is measured varying imbalance factor $\rho$. Complete results of tuning each model are included in Appendix \ref{dataset}.
    }
    \subcaptionbox{CelebA}{
    \resizebox{0.515\linewidth}{!}{
        \centering
    \begin{tabular}{l|r|rrrr}
    \toprule[1.5pt]
    \multicolumn{1}{l}{Framework} &\multicolumn{1}{r}{All} & Bald & Mst & Gray & No Bd\\ \midrule[1pt]
    $t^3$VAE ($\nu=10$) & $\mathbf{39.4}$ &$\mathbf{66.5}$ & $\mathbf{61.5}$ & $\mathbf{67.2}$ & $\mathbf{40.1}$\\
    
    VAE &$57.9$ & $85.8$ & $79.7$ & $91.0$ & $58.4$\\
    \multirow{1}{*}{VAE ($\kappa = 1.5$)}& $73.2$ & $105.3$ & $96.4$ & $114.5$ & $73.8$ \\ 
    
    $\beta$-VAE ($\beta = 0.05$)& $40.4$ & $69.3$ & $62.7$ & $71.1$ & $40.9$\\

    Student-$t$ VAE  & $78.4$ & $112.0$ & $104.2$ & $118.7$ & $78.6$\\

    \multirow{1}{*}{DE-VAE ($\nu=5$)} & $58.9$ & $89.6$ &$84.3$ & $94.9$ & $59.1$\\

    \multirow{1}{*}{Tilted VAE ($\tau=50$)} & $42.6$ & $73.0$ & $65.4$ & $73.7$ & $42.9$\\

    \multirow{1}{*}{FactorVAE ($\gamma_{\text{tc}} = 5$)} & $59.8$ & $91.7$ & $85.7$ & $95.2$ & $60.8$ \\
    \bottomrule[1.5pt]
    \end{tabular}
    }}
    \subcaptionbox{CIFAR100-LT}{
    \resizebox{0.47\linewidth}{!}{
        \centering
    \begin{tabular}{l|llll}
    \toprule[1.5pt]
    \multicolumn{1}{l}{\multirow{1}{*}{Framework}} &  \multicolumn{1}{c}{$\rho = 1$}& \multicolumn{1}{c}{10} & \multicolumn{1}{c}{50}& \multicolumn{1}{c}{100} \\
    \midrule[1pt]
     $t^3$VAE ($\nu=10$) &$\mathbf{97.5}$&$\mathbf{102.8}$&$\mathbf{108.3}$&$\mathbf{128.7}$ \\
     VAE & $256.1$
        &$267.2$  & $277.4$ & $287.3$\\
     VAE ($\kappa = 1.5$) & $274.2$& $290.5$&$296.7$& $297.7$ \\
     $\beta$-VAE ($\beta=0.1$) & $114.1$&$130.4$ & $138.5$ & $160.6$\\
     Student-$t$ VAE &  
     $259.5$ & $314.1$& $323.7$ & $333.4$\\
     DE-VAE ($\nu = 2.5$) & $219.4$ & $250.2$& $256.7$&$258.5$\\ 
     Tilted VAE ($\tau = 50$) & $101.0$& $126.1$ & $147.0$& $193.2$\\ 
     FactorVAE ($\gamma_{\text{tc}} = 5$)  & $232.3$ & $272.5$ & $275.6$ & $270.1$\\ 
     
    \bottomrule[1.5pt]
    \end{tabular}
    }}
    \label{table:FIDscores}
\end{table}

\begin{figure}
    \centering
    \subcaptionbox*{\\[-12pt]Original}{
    \begin{minipage}{.16\linewidth}
        \centering
    \includegraphics[width=\linewidth]{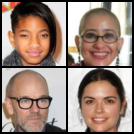}
    \end{minipage}}
    \quad
    \subcaptionbox*{\\[-12pt]$t^3$VAE}{
    \begin{minipage}{.16\linewidth}
        \centering
        \includegraphics[width=\linewidth]{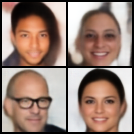}
    \end{minipage}}
    \quad
    \subcaptionbox*{\\[-12pt]VAE}{
    \begin{minipage}{.16\linewidth}
        \centering
    \includegraphics[width=\linewidth]{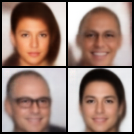}
    \end{minipage}}
    \quad
    \subcaptionbox*{\\[-12pt]VAE $(\kappa=1.5)$}{
    \begin{minipage}{.16\linewidth}
        \centering
        \includegraphics[width=\linewidth]{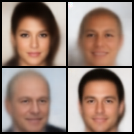}
    \end{minipage}}
    \quad
    \subcaptionbox*{\\[-12pt]Tilted VAE}{
    \begin{minipage}{.16\linewidth}
        \centering
        \includegraphics[width=\linewidth]{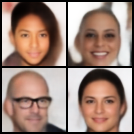}
    \end{minipage}}
    \caption{Original and reconstructed images by $t^3$VAE ($\nu = 10$), Gaussian VAE, VAE with $\kappa = 1.5$, and Tilted VAE ($\tau = 50$).}
    \label{fig:recon_images}
\end{figure}


\paragraph{CelebA reconstruction.} We evaluate the quality of reconstructed $64\times 64$ images in Table \ref{table:FIDscores}(a), where $t^3$VAE consistently achieves the lowest FID out of all models. While Tilted VAE and $\beta$-VAE also show good performance, they still cannot match $t^3$VAE, in particular for rarer class labels. These images exist in low-density regions of the data distribution; therefore, we hypothesize that the heavy-tailedness of $t^3$VAE makes it better suited in particular to learning rare features. We note that simply increasing the variance of a Gaussian VAE results in images with reduced clarity. Moreover, disentangling does not seem to significantly alleviate over-regularization.


Investigating further, the images in Figure \ref{fig:recon_images} display rare feature combinations; for example, the top left image belongs to the intersection of the Male and Heavy Make-up classes, which constitute around 1\% of all images. We see that the Gaussian VAE largely fails to learn the image and instead generates a much more feminine face, evidenced by e.g. eyeshadow and lip color. In contrast, our model is able to more closely mirror the original image. Hence, $t^3$VAE is capable of learning images more accurately, in particular those with rare features.

\begin{figure}[!ht]
\begin{minipage}[t]{0.65\textwidth}
\vspace{0pt}
    \centering
    \subcaptionbox*{\\[-13pt]$t^3$VAE ($\nu=10$)}{
    \includegraphics[width=0.44\linewidth]{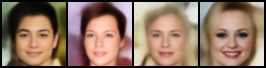}}
    \subcaptionbox*{\\[-13pt]VAE}{
    \includegraphics[width=0.44\linewidth]{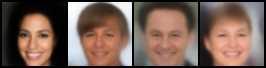}}
    \subcaptionbox*{\\[-13pt]VAE ($\kappa=1.5$)}{
    \includegraphics[width=0.44\linewidth]{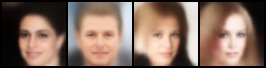}}
    \subcaptionbox*{\\[-13pt]$\beta$-VAE}{
    \includegraphics[width=0.44\linewidth]{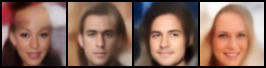}}
    \subcaptionbox*{\\[-13pt]Student-$t$ VAE}{
    \includegraphics[width=0.44\linewidth]{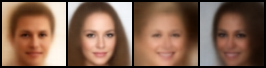}}
    \subcaptionbox*{\\[-13pt]DE-VAE}{
    \includegraphics[width=0.44\linewidth]{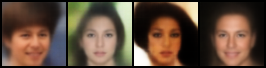}}
    \subcaptionbox*{\\[-13pt]Tilted VAE}{
    \includegraphics[width=0.44\linewidth]{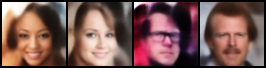}}
    \subcaptionbox*{\\[-13pt]FactorVAE}{
    \includegraphics[width=0.44\linewidth]{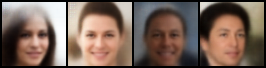}}
    \captionlistentry{}
    \label{fig:CelebAgenimgs}
\end{minipage}
\begin{minipage}[t]{0.34\textwidth}
\vspace{0pt}
\centering
\resizebox{\textwidth}{!}{
\begin{tabular}{lr}
\toprule[1.5pt]
\multicolumn{1}{l}{Framework} & FID \\ \midrule[1pt]
$t^3$VAE ($\nu=10$) & $\mathbf{50.6}$ \\
VAE & $64.7$\\
VAE ($\kappa = 1.5$)& $79.6$\\ 
$\beta$-VAE ($\beta = 0.05$)& $51.8$\\
Student-$t$ VAE & $82.3$\\
DE-VAE ($\nu=2.5$) & $58.9$\\
Tilted VAE ($\tau=30$) & $59.2$\\
FactorVAE ($\gamma_{\text{tc}} = 2.5$) & $67.0$ \\
\bottomrule[1.5pt]
\end{tabular}}
\captionof{table}{Generation FID scores for CelebA.\\[2pt]$\blacktriangleleft$ Figure 4: Generated CelebA example images.}

\label{table:CelebAgen}
\end{minipage} 
\end{figure}

\paragraph{CelebA generation.} Works on VAE models often do not consider the generative aspect due to difficulties in producing sharp images. Nevertheless, we find that $t^3$VAE consistently generates high-quality samples if we sample from the \emph{alternative} t-prior $p_{\nu}^{\star}(z)$. As demonstrated in Table \ref{table:CelebAgen} and Figure \ref{fig:CelebAgenimgs}, $t^3$VAE outperforms all other models in FID score and generates more vivid images. We note that $\beta$-VAE is a close contender but cannot surpass $t^3$VAE even when $\beta$ is fine-tuned; generation FID scores for various $\beta$ values are documented in Appendix \ref{dataset}.

\begin{figure}
    \centering
    \subcaptionbox*{\\[-12pt]Original}{
    \begin{minipage}{.30\linewidth}
        \centering
    \includegraphics[width=\linewidth]{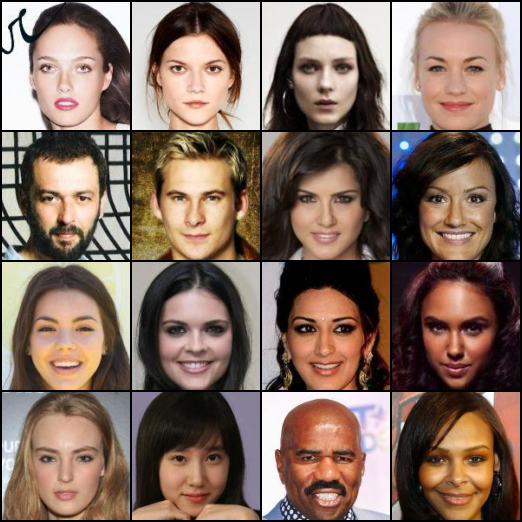}
    \end{minipage}}
    \quad
    \subcaptionbox*{\\[-12pt]$t^3$HVAE}{
    \begin{minipage}{.30\linewidth}
        \centering
        \includegraphics[width=\linewidth]{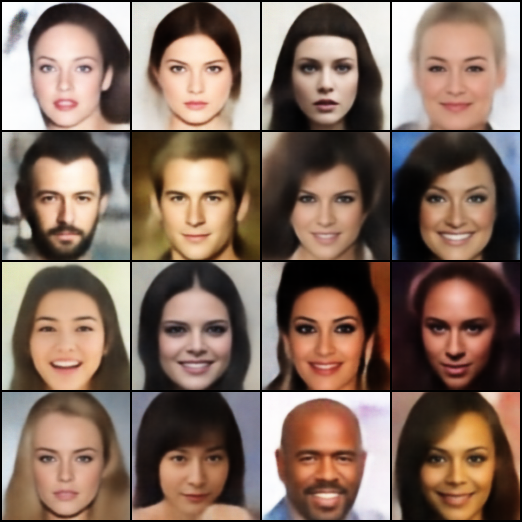}
    \end{minipage}}
    \quad
    \subcaptionbox*{\\[-12pt]HVAE}{
    \begin{minipage}{.30\linewidth}
        \centering
    \includegraphics[width=\linewidth]{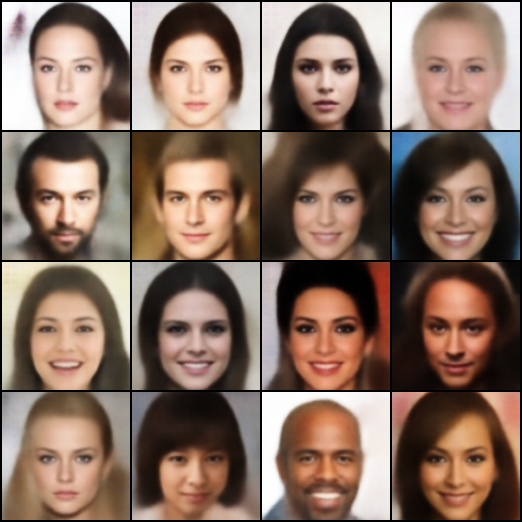}
    \end{minipage}}
    \caption{Original and reconstructed images by $t^3$HVAE ($\nu = 10$) and HVAE.}
    \label{fig:t3HVAE_recon_images}
\end{figure}

\paragraph{Imbalanced CIFAR.} One interpretation of the heavy-tailedness of real-world data is when the occurrence frequencies of each class follow a long-tailed distribution. For instance, CelebA comprises facial attribute labels with highly varying frequencies. Motivated by this observation, we conduct reconstruction experiments with the CIFAR100-LT dataset \citep{cao2019learning}, which is a long-tailed version of the original CIFAR-100 \citep{Krizhevsky09learningmultiple}. We further consider varying degrees of \emph{imbalance}, quantified by the imbalance factor $\rho$ which is defined as the ratio of instances in the largest class to the smallest. In this experiment, we take $\rho\in\{ 1, 10, 50, 100\}$ by linearly reducing the number of instances in each class.

Table \ref{table:FIDscores}(b) reports reconstruction FID scores of each tuned model, where $t^3$VAE again achieves the lowest scores. This time, we draw the reader's attention to the comparative performance of $t^3$VAE, $\beta$-VAE and Tilted VAE as $\rho$ varies. While the scores are relatively similar in the balanced case, the gaps become larger as $\rho$ increases; that is, $t^3$VAE experiences less of a performance drop even for extremely lopsided datasets ($\rho=100$). In conclusion, we verify that $t^3$VAE is especially strong with regard to imbalanced data with long tails.

\paragraph{$t^3$HVAE.} We additionally implement the two-layer hierarchical $t^3$HVAE constructed in Appendix \ref{appendixhier} and compare with the Gaussian HVAE on higher resolution $128\times 128$ CelebA images. The reconstruction results are displayed in Figure \ref{fig:t3HVAE_recon_images}, and corresponding FID scores are recorded in Table \ref{table:CelebAFull}. We see that the increased hierarchical depth allows $t^3$HVAE to learn more sophisticated images, again with substantially higher clarity and sharper detail compared to the Gaussian HVAE. These results further justify the generality and effectiveness of our theoretical framework.

\paragraph{Training cost and hyperparameter selection.} Unlike models such as DE-VAE or VAE-st which require numerical integration to calculate KL divergence between t-distributions for each data point, the explicit form of the $\gamma$-loss \eqref{gammaloss}, as well as the t-based sampling and reparametrization processes, do not create any computational bottlenecks or instability issues; the runtime of $t^3$VAE was virtually identical to the corresponding Gaussian VAE. In addition, as shown in Figure \ref{fig:regfig} and also corroborated by experiments, $t^3$VAE's performance remains consistent across different values of $\nu$ for high-dimensional data. Thus another unique strength of our model is that there is no need to extensively tune the hyperparameter; only a couple of trials suffice.

\section{Conclusion}
Motivated by topics in over-regularization and information geometry, we establish a novel VAE framework for approximating heavy-tailed densities that uses Student's t-distributions for prior, encoder and decoder. We also derive the $\gamma$-loss objective by replacing the KL divergence joint optimization formulation of ELBO with $\gamma$-power divergence, and study the effects of its regularizer. The generative and tail-modeling capabilities of $t^3$VAE are demonstrated on heavy-tailed synthetic data. Finally, we show that $t^3$VAE outperforms various alternative VAE models and can learn images in richer detail, especially in the presense of rare classes or imbalance. Our ideas may hopefully be extended to other probabilistic or divergence-based inference models in future works.

\newpage
\section*{Acknowledgments}

This work was supported by AI-Bio Research Grant through Seoul National University.

\bibliography{iclr2024_conference}
\bibliographystyle{iclr2024_conference}

\renewcommand{\thetable}{\Alph{section}\arabic{table}}
\renewcommand{\thefigure}{\Alph{section}\arabic{figure}}
\setcounter{table}{0}
\setcounter{figure}{0}

\newpage
\appendix
\section*{\Large Appendix}

\begingroup
\allowdisplaybreaks

In Appendices \ref{appendixproofs} and \ref{appendixcomputations} we give precise statements, proofs and more details for the theory presented in Sections \ref{sectiontheory} and \ref{sectionmodel}. The argument in \ref{appendixelbo} is due to \citet{han2020projections}. The material in \ref{appendixinfgeom} up to Proposition \ref{propmaxent} is adapted from \citet{eguchi2021pythagoras}, while the general theory of information geometry is presented in \citet{amari2016information}. All subsequent material is our own original work. In Appendix \ref{appendixexperiments}, we provide implementation and training details for the experiments conducted in our paper.

\section{Theoretical details}\label{appendixproofs}

\subsection{Rewriting the ELBO as KL Divergence}\label{appendixelbo}

Equation \eqref{jointdkl} states that minimizing the divergence between points on the model and data distribution manifolds is equivalent to maximizing the expected ELBO. This can be shown by performing the following algebraic manipulation:
\begin{align*}
&\dkl{q_\phi(x,z)}{p_\theta(x,z)} = \iint p_{\data}(x)q_\phi(z|x) \log\frac{p_{\data}(x)q_\phi(z|x)}{p_\theta(x) p_\theta(z|x)} dzdx \\
	&\quad = \int p_{\data}(x) \log\frac{p_{\data}(x)}{p_\theta(x)}dx + \int p_{\data}(x) \left[ \int q_\phi(z|x)\log\frac{q_\phi(z|x)}{p_\theta(z|x)} dz \right] dx \\
	&\quad = \expect{x\sim p_{\data}}{-\log p_\theta(x) + \dkl{q_\phi(z|x)}{p_\theta(z|x)}} -H(p_{\data})\\
 &\quad = -\expect{x\sim p_{\data}}{\mathcal{L}(x;\theta,\phi)} -H(p_{\data}).
\end{align*}
This formulation opens the door to various VAE modifications by replacing the joint model KL divergence with other divergences, assuming closed-form computation is possible and results in an estimable objective.

\subsection{Information Geometry of $\gamma$-power Divergence}\label{appendixinfgeom}

Let $\mathcal{M}=\{p_\theta(x): \theta\in\Theta\}$ be a statistical manifold of probability distributions on a probability space $X$ parametrized by local coordinates $\theta=(\theta_1,\cdots,\theta_d)^\top$. A divergence on $\mathcal{M}$ is a $C^2$ function $\mathcal{D}:\mathcal{M}\times\mathcal{M} \to\mathbb{R}_{\geq 0}$ satisfying
\begin{enumerate}
    \item $\dd{q}{p}\geq 0$ for all $p,q\in\mathcal{M}$,

    \item $\dd{q}{p}=0$ if and only if $p=q$,

    \item At any point $p_\theta\in\mathcal{M}$, $\dd{p_\theta}{p_{\theta'}}$ is a positive-definite quadratic form for infinitesimal displacements $\theta'=\theta+d\theta$,
    \begin{equation*}
    \dd{p_\theta}{p_{\theta'}} = \frac{1}{2}\sum_{i,j=1}^d g_{ij}(\theta) d\theta_i d\theta_j + O(\norm{d\theta}^3)
    \end{equation*}
    for a symmetric positive-definite matrix $g(\theta)=(g_{ij}(\theta))_{1\leq i,j\leq d}$.
\end{enumerate}

\begin{prop}\label{propdivdef}
$\gamma$-power divergence as defined in Equations \eqref{cgammadef}-\eqref{dgammadef} for $\gamma\in (-1,0)\cup (0,\infty)$ is a divergence on the finite $\gamma$-entropy submanifold $\{p\in\mathcal{M}: \norm{p}_{1+\gamma}<\infty\}$ of $\mathcal{M}$.
\end{prop}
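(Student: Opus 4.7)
The plan is to verify the three defining axioms of a divergence in turn. Non-negativity and the equality characterization both reduce to Hölder's inequality in its forward and reverse forms, while positive-definiteness of the induced quadratic form follows from a Taylor expansion that yields a weighted Fisher information.

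First I would tackle non-negativity. The key identity, obtained by multiplying through by $\gamma$, is
\begin{equation*}
\gamma\,\dgamma{q}{p} \;=\; \norm{q}_{1+\gamma} \;-\; \int q(x)\left(\frac{p(x)}{\norm{p}_{1+\gamma}}\right)^{\!\gamma} dx,
\end{equation*}
so the sign of $\dgamma{q}{p}$ is controlled by that of the bracketed quantity together with the sign of $\gamma$. For $\gamma>0$, standard Hölder applied to $\int q\cdot p^\gamma\,dx$ with conjugate exponents $1+\gamma$ and $(1+\gamma)/\gamma$ bounds it above by $\norm{q}_{1+\gamma}\norm{p}_{1+\gamma}^{\gamma}$, so the bracket is nonnegative and $\dgamma{q}{p}\geq 0$. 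For $-1<\gamma<0$, the exponent $1+\gamma$ lies in $(0,1)$ and its Hölder conjugate $(1+\gamma)/\gamma$ is negative, so the reverse Hölder inequality applies and the inequality direction flips; combined with the negative factor $\gamma^{-1}$ this again yields $\dgamma{q}{p}\geq 0$. This is precisely the role of the $1/\gamma$ factor inserted in Equation \eqref{dgammadef}.

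Next I would handle the equality characterization. Equality in both Hölder and reverse Hölder holds if and only if $q^{1+\gamma}$ and $p^{1+\gamma}$ are proportional almost everywhere; since $\int p = \int q = 1$, the proportionality constant must equal $1$, so $p=q$ a.e. Finally, for positive-definiteness of the induced metric, I would write $Z(\theta)=\int p_\theta^{1+\gamma}dx$ and Taylor expand $\dgamma{p_\theta}{p_{\theta+d\theta}}$ in $d\theta$. The zeroth-order term vanishes by axiom (2), and the first-order term vanishes because $p_\theta$ minimizes $\dgamma{p_\theta}{\cdot}$ by axiom (1). Differentiating $Z(\theta')^{-\gamma/(1+\gamma)}$ and $\int p_\theta p_{\theta'}^\gamma dx$ twice in $\theta'$, collecting terms and using $\int p_\theta\,\partial_i \log p_\theta\,dx = 0$ gives a quadratic form proportional to the weighted Fisher information
\begin{equation*}
g_{ij}(\theta) \;\propto\; \int p_\theta(x)^{1+\gamma}\bigl(\partial_i \log p_\theta(x)\bigr)\bigl(\partial_j \log p_\theta(x)\bigr)\,dx,
\end{equation*}
with a strictly positive prefactor. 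This inherits positive-definiteness from the standard identifiability assumption that the score functions $\partial_i \log p_\theta$ are linearly independent in $L^2(p_\theta^{1+\gamma}\,dx)$.

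The main obstacle is the careful handling of signs and inequality directions in the regime $\gamma\in(-1,0)$, where $\gamma^{-1}$ is negative, the Hölder inequality reverses, and Eguchi's original definition (without the $1/\gamma$ factor) fails to be nonnegative; the modification in \eqref{dgammadef} is precisely what unifies the two regimes. The Hessian calculation is routine but nontrivial bookkeeping: the normalization $\norm{p}_{1+\gamma}$ in the denominator contributes cross-terms at second order, and one must verify that these combine with the derivatives of $\int p_\theta p_{\theta'}^{\gamma}dx$ to produce the clean weighted-Fisher form above rather than a sign-indefinite remainder.
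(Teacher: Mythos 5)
Your treatment of non-negativity and the equality case is exactly the paper's argument: forward H\"older for $\gamma>0$, reverse H\"older for $-1<\gamma<0$ (where the conjugate exponent $(1+\gamma)/\gamma$ is negative), with the sign of the $\gamma^{-1}$ prefactor compensating for the flipped inequality, and equality forcing $q^{1+\gamma}\propto p^{1+\gamma}$, hence $q=p$ by normalization. That part is correct and matches the paper.

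The gap is in the metric computation. You assert that the cross-terms produced by differentiating the normalization $\norm{p_{\theta'}}_{1+\gamma}$ cancel via $\int p_\theta\,\partial_i\log p_\theta\,dx=0$, leaving the pure weighted Fisher form $\int p_\theta^{1+\gamma}\,\partial_i\log p_\theta\,\partial_j\log p_\theta\,dx$. That cancellation does not occur: the identity you invoke kills $\int\partial_i p_\theta\,dx$, but the normalization contributes terms proportional to $\int p_\theta^{\gamma}\,\partial_i p_\theta\,dx=\frac{1}{1+\gamma}\,\partial_i\!\int p_\theta^{1+\gamma}dx$, i.e.\ the derivative of the $\gamma$-entropy, which vanishes only when $\norm{p_\theta}_{1+\gamma}$ is constant in $\theta$ (as at $\gamma=0$). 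The correct second-order form, as computed in the paper, is
\begin{equation*}
g_{ij}(\theta)=\norm{p_\theta}_{1+\gamma}^{-1-2\gamma}\left[\left(\int p_\theta^{1+\gamma}dx\right)\left(\int p_\theta^{\gamma-1}\,\partial_i p_\theta\,\partial_j p_\theta\,dx\right)-\left(\int p_\theta^{\gamma}\,\partial_i p_\theta\,dx\right)\left(\int p_\theta^{\gamma}\,\partial_j p_\theta\,dx\right)\right],
\end{equation*}
a covariance-type expression whose positive semi-definiteness follows from Cauchy--Schwarz applied to $p_\theta^{(1+\gamma)/2}$ and $p_\theta^{(\gamma-1)/2}\,v^\top\nabla_\theta p_\theta$; definiteness then requires that $v^\top\nabla_\theta\log p_\theta$ be non-constant for every $v\neq 0$, which is not quite the same as linear independence of the scores in $L^2(p_\theta^{1+\gamma}dx)$. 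Your claimed metric is only the first of the two terms (it dominates the true one in the positive semi-definite order), so the conclusion survives but the derivation as written does not produce the actual induced metric.
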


\begin{proof}
When $\gamma>0$, we have by H\"{o}lder's inequality
\begin{equation*}
\int q(x)p(x)^\gamma dx \leq \norm{q}_{1+\gamma}\norm{p^\gamma}_{1+1/\gamma} = \norm{q}_{1+\gamma}\norm{p}_{1+\gamma}^\gamma,
\end{equation*}
with equality iff $p=q$. When $-1<\gamma<0$, we have by the reverse H\"{o}lder inequality
\begin{equation*}
\int q(x)p(x)^\gamma dx \geq \norm{q}_{1+\gamma}\norm{p^\gamma}_{1+1/\gamma} = \norm{q}_{1+\gamma}\norm{p}_{1+\gamma}^\gamma,
\end{equation*}
again with equality iff $p=q$. In both cases we conclude that
\begin{equation*}
\dgamma{q}{p}=-\frac{1}{\gamma} \int q(x) \left(\frac{p(x)}{\norm{p}_{1+\gamma}}\right)^\gamma dx+\frac{1}{\gamma} \norm{q}_{1+\gamma}\geq 0.
\end{equation*}
The 1st and 2nd order local expansion terms of $\dgamma{p_\theta}{p_{\theta'}}$ around $\theta'=\theta$ can be computed as
\begin{align*}
\frac{\partial}{\partial\theta_i}\bigg|_{\theta'=\theta}\!\dgamma{p_\theta}{p_{\theta'}} &= - \frac{1}{\gamma} \int \frac{\partial p_\theta}{\partial\theta_i}(x)\left(\frac{p_\theta(x)}{\norm{p_\theta}_{1+\gamma}}\right)^\gamma dx +\frac{1}{\gamma} \norm{p_\theta}_{1+\gamma}^{-\gamma} \int p_\theta(x)^\gamma \frac{\partial p_\theta}{\partial\theta_i}(x) dx=0,\\
\frac{\partial}{\partial\theta_i'}\bigg|_{\theta'=\theta}\!\dgamma{p_\theta}{p_{\theta'}} &= -\int p_\theta(x) \left(\frac{p_\theta(x)}{\norm{p_\theta}_{1+\gamma}}\right)^{\gamma-1} \frac{\partial}{\partial\theta_i} \left(\frac{p_\theta(x)}{\norm{p_\theta}_{1+\gamma}}\right) dx =0,
\end{align*}
and
\begin{align*}
g_{ij}(\theta)&=-\frac{\partial^2}{\partial\theta_i \partial\theta_j'}\bigg|_{\theta'=\theta}\!\dgamma{p_\theta}{p_{\theta'}} \\
&= \int \frac{\partial p_\theta}{\partial\theta_i}(x) \left(\frac{p_\theta(x)}{\norm{p_\theta}_{1+\gamma}}\right)^{\gamma-1} \frac{\partial}{\partial\theta_j}\left( \frac{p_\theta(x)}{\norm{p_\theta}_{1+\gamma}} \right)dx\\
&=\norm{p_\theta}_{1+\gamma}^{-1-2\gamma} \biggl[ \left(\int p_\theta(x)^{1+\gamma}dx \right)\left(\int p_\theta(x)^{\gamma-1} \frac{\partial p_\theta}{\partial\theta_i}(x)\frac{\partial p_\theta}{\partial\theta_j}(x)dx \right) \\
&\qquad-\left(\int p_\theta(x)^\gamma \frac{\partial p_\theta}{\partial\theta_i}(x)dx\right)\left( \int p_\theta(x)^\gamma \frac{\partial p_\theta}{\partial\theta_j}(x)dx \right)\biggr]
\end{align*}
from which we can check that $g_{ij}(\theta)=g_{ji}(\theta)$.
\end{proof}

Any divergence $\mathcal{D}$ on $\mathcal{M}$ thus naturally induces a Riemannian metric $g$ on $\mathcal{M}$. For KL divergence, this is the Fisher information metric. $\mathcal{D}$ also induces two affine connections $\Gamma, \Gamma^*$ on $\mathcal{M}$ as
\begin{equation*}
\Gamma_{ij}^k(\theta)=-\frac{\partial^3}{\partial\theta_i \partial\theta_j \partial\theta_k'}\bigg|_{\theta'=\theta}\! \dd{p_\theta}{p_{\theta'}}, \quad \Gamma_{ij}^{*k}(\theta)=-\frac{\partial^3}{\partial\theta_i' \partial\theta_j' \partial\theta_k}\bigg|_{\theta'=\theta}\! \dd{p_\theta}{p_{\theta'}}.
\end{equation*}
The connections are dually coupled with respect to $g$, i.e. the parallel transport of any two vectors, each by $\Gamma$ and $\Gamma^*$, preserves their inner product. For $\gamma$-power divergence, it is straightforward to check that $\Gamma_{ij}^k(\theta)$ vanish identically for mixture families
\begin{equation*}
p_\theta(x) = \sum_{i=1}^d \theta_i p^{(i)}(x) +\left(1-\sum_{i=1}^d\theta_i\right)p^{(0)}(x).
\end{equation*}
Hence, the $\Gamma$-autoparallel curve connecting two points is equivalent to the linearly interpolating $m$-geodesic, identically to KL divergence.
Moreover, the dual symbols
\begin{equation*}
\Gamma_{ij}^{*k}(\theta) = \frac{1}{\gamma} \int \frac{\partial p_\theta}{\partial\theta_i}(x) \frac{\partial^2}{\partial\theta_j\partial\theta_k} \left(\frac{p_\theta(x)}{\norm{p_\theta}_{1+\gamma}}\right)^\gamma dx
\end{equation*}
vanish when $p_\theta(x)^\gamma$ is linear in $\theta$. Hence the $\Gamma^*$-autoparallel curve from distribution $p$ to $q$ is given by the $\gamma$-power geodesic
\begin{equation*}
p_t(x)\propto \left[ (1-t)\bigg( \frac{p(x)}{\norm{p}_{1+\gamma}}\bigg)^\gamma + t \bigg( \frac{q(x)}{\norm{q}_{1+\gamma}}\bigg)^\gamma\right]^{\frac{1}{\gamma}}, \quad t\in [0,1].
\end{equation*}
Moreover, the totally $\Gamma^*$-geodesic submanifolds consist of power families \eqref{gammaflat} and in particular t-distributions with $\gamma=-\frac{2}{\nu+d}$. The power form can also be recovered as the maximal $\gamma$-entropy distributions.

\begin{prop}\label{propmaxent}
For any statistic $s_0(x)$, a maximal $\gamma$-entropy distribution $p$ satisfying the mean constraint $\expect{x\sim p}{s_0(x)}=\mu$ is of the form of Equation \eqref{gammaflat}.
\end{prop}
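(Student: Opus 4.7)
The natural strategy for a maximum entropy characterization of this kind is the method of Lagrange multipliers via calculus of variations. Introducing a scalar multiplier $\lambda_0$ for normalization and a (possibly vector-valued) multiplier $\theta$ for the mean constraint, I would form the Lagrangian
\[
L(p;\lambda_0,\theta) = -\norm{p}_{1+\gamma} - \lambda_0\!\left(\int p(x)\,dx - 1\right) - \theta^\top\!\left(\int p(x) s_0(x)\,dx - \mu\right)
\]
and search for pointwise stationary densities $p$.

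The one nontrivial computation is the functional derivative of $\gamma$-entropy. Applying the chain rule to $\norm{p}_{1+\gamma}=\bigl(\int p^{1+\gamma}\bigr)^{1/(1+\gamma)}$ yields
\[
\frac{\delta}{\delta p(x)}\norm{p}_{1+\gamma} = \norm{p}_{1+\gamma}^{-\gamma}\, p(x)^\gamma,
\]
so $\delta L/\delta p(x)=0$ gives the pointwise condition $\norm{p}_{1+\gamma}^{-\gamma} p(x)^\gamma = -\lambda_0 - \theta^\top s_0(x)$, whence $p(x) = (a+b^\top s_0(x))^{1/\gamma}$ for constants $a,b$ depending on the multipliers and on the scalar $\norm{p}_{1+\gamma}$. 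The implicit dependence on $\norm{p}_{1+\gamma}$ poses no issue because it is a single number determined by $p$ and merely rescales $a$ and $b$.

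A cosmetic reparametrization then concludes the proof: factoring out $a^{1/\gamma}$ absorbs the leading constant into the normalization, giving $p(x) \propto (1+\gamma\tilde\theta^\top s_0(x))^{1/\gamma}$ with $\tilde\theta = b/(\gamma a)$, which matches \eqref{gammaflat} upon taking $s = s_0$. The main technical obstacle I anticipate is verifying admissibility: one must restrict $\tilde\theta$ to a parameter set $\Theta$ on which $1+\gamma\tilde\theta^\top s_0(x)$ remains positive on $\supp p$ and the resulting density is integrable, a standard natural-parameter condition paralleling the exponential family case. One should also confirm this critical point is a genuine maximum, which follows from (strict) convexity of $\norm{\cdot}_{1+\gamma}$ when $\gamma>0$ and a dual concavity argument when $-1<\gamma<0$, using the reverse Hölder inequality already invoked in the proof of Proposition \ref{propdivdef}.
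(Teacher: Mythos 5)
Your proposal is correct and follows essentially the same route as the paper: a Lagrangian/calculus-of-variations argument whose stationarity condition forces $p(x)^\gamma$ to be affine in $s_0(x)$, followed by a rescaling to reach the form \eqref{gammaflat}. The only differences are cosmetic — you vary the norm $\norm{p}_{1+\gamma}$ with an explicit normalization multiplier and comment on admissibility and second-order conditions, whereas the paper varies $\int p^{1+\gamma}$ directly and absorbs normalization into the final scaling.
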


\begin{proof}
Introducing a multiplier $\lambda$, we seek the stationary points of the Lagrangian functional
\begin{equation*}
\mathcal{J}(p,\lambda) = \int p(x)^{1+\gamma} dx + \lambda^\top \int (s_0(x)-\mu) p(x)dx.
\end{equation*}
The corresponding Euler-Lagrange equation is
\begin{equation*}
(1+\gamma)p(x)^\gamma + \lambda^\top (s_0(x)-\mu)=0,
\end{equation*}
which yields Equation \eqref{gammaflat} after a suitable scaling with $s(x)\propto s_0(x)$.
\end{proof}
We proceed to analyze the $\gamma$-power divergence between two t-distributions.

\begin{prop}\label{propdgammatt}
The $\gamma$-power divergence from $q_\nu\sim t_d(\mu_0,\Sigma_0,\nu)$ to $p_\nu\sim t_d(\mu_1,\Sigma_1,\nu)$ is given by Equation \eqref{dgammatt} when $\nu>2$ and $\gamma=-\frac{2}{\nu+d}$.
\end{prop}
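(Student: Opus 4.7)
The plan is to exploit the special algebraic feature of the coupling $\gamma = -\frac{2}{\nu+d}$: it makes the exponent $-\frac{\nu+d}{2}$ of a $t_d(\cdot,\cdot,\nu)$-density into the reciprocal of $\gamma$, so that $p_\nu^\gamma$ collapses to an \emph{affine} function of the Mahalanobis quadratic form, while $p_\nu^{1+\gamma}$ becomes the kernel of another (appropriately rescaled) t-density. Both observations reduce every integral involved to either a normalising constant of a t-distribution or to the first two moments of $q_\nu$.

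First I would compute the two basic integrals separately. From $\gamma\cdot(-\tfrac{\nu+d}{2})=1$ we get
\begin{equation*}
p_\nu(x)^\gamma = C_{\nu,d}^{\gamma}\deter{\Sigma_1}^{-\gamma/2}\Bigl(1 + \tfrac{1}{\nu}(x-\mu_1)^\top\Sigma_1^{-1}(x-\mu_1)\Bigr).
\end{equation*}
Integrating against $q_\nu$ and using the standard t-moments $\mathbb{E}_{q_\nu}[x]=\mu_0$ and $\Cov_{q_\nu}[x]=\tfrac{\nu}{\nu-2}\Sigma_0$ (valid for $\nu>2$) gives a closed form involving the trace and quadratic terms that appear in \eqref{dgammatt}. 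Next, $p_\nu(x)^{1+\gamma}$ has exponent $-\tfrac{\nu+d-2}{2}$ and is, after rescaling $\Sigma_1\mapsto\tfrac{\nu}{\nu-2}\Sigma_1$, proportional to a $t_d(\mu_1,\tfrac{\nu}{\nu-2}\Sigma_1,\nu-2)$-density; integrating to $1$ yields
\begin{equation*}
\int p_\nu^{1+\gamma}\,dx = K\,\deter{\Sigma_1}^{-\gamma/2},\qquad K := \frac{C_{\nu,d}^{1+\gamma}}{C_{\nu-2,d}}\Bigl(\frac{\nu}{\nu-2}\Bigr)^{d/2}.
\end{equation*}

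Next I assemble $\cgamma{q_\nu}{p_\nu} = -\norm{p_\nu}_{1+\gamma}^{-\gamma}\int q_\nu p_\nu^\gamma\,dx$ from the two pieces, with the $\Sigma_1$ powers combining as $-\gamma/(2(1+\gamma))$. Applying the same computation with $p=q$ gives $\hgamma{q_\nu} = -K^{1/(1+\gamma)}\deter{\Sigma_0}^{-\gamma/(2(1+\gamma))}$ (since the bracketed moment factor collapses to $1+\tfrac{d}{\nu-2}$ when $\mu_0=\mu_1$ and $\Sigma_0=\Sigma_1$, this identity also shows up naturally below). Substituting both into $\dgamma{q_\nu}{p_\nu}=\gamma^{-1}(\cgamma{q_\nu}{p_\nu}-\hgamma{q_\nu})$ produces a prefactor $-\gamma^{-1}K^{-\gamma/(1+\gamma)}C_{\nu,d}^{\gamma}$ times a bracket $\deter{\Sigma_1}^{-\gamma/(2(1+\gamma))}[\ldots] - K\,C_{\nu,d}^{-\gamma}\deter{\Sigma_0}^{-\gamma/(2(1+\gamma))}$.

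The last step is purely algebraic cleanup. The main obstacle is matching this with the exact constants in \eqref{dgammatt}, which hinges on the Gamma-function identity
\begin{equation*}
\frac{C_{\nu,d}}{C_{\nu-2,d}} = \Bigl(1+\tfrac{d}{\nu-2}\Bigr)\Bigl(\tfrac{\nu-2}{\nu}\Bigr)^{d/2}
\end{equation*}
(obtained from the recursion $\Gamma(s+1)=s\Gamma(s)$ applied at $s=(\nu-2)/2$ and $s=(\nu+d-2)/2$). This identity immediately yields $K C_{\nu,d}^{-\gamma} = 1+\tfrac{d}{\nu-2}$, and then the factorisation $K^{-\gamma/(1+\gamma)}C_{\nu,d}^{\gamma} = (KC_{\nu,d}^{-\gamma})^{-\gamma/(1+\gamma)}C_{\nu,d}^{\gamma/(1+\gamma)}$ reproduces exactly the prefactor in \eqref{dgammatt}. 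The rest is substitution. I expect the only delicate bookkeeping to be this Gamma-ratio computation and carefully tracking powers of $\deter{\Sigma_i}$ through the Hölder exponent $\tfrac{1}{1+\gamma}$; the rest is essentially forced by the two moments of $q_\nu$.
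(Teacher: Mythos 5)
Your proposal is correct and follows essentially the same route as the paper: exploit $\gamma\bigl(-\tfrac{\nu+d}{2}\bigr)=1$ so that $p_\nu^\gamma$ becomes affine in the Mahalanobis form, reduce $\int q_\nu p_\nu^\gamma\,dx$ to the first two t-moments, and assemble $\mathcal{H}_\gamma$ and $\mathcal{C}_\gamma$ with the same bookkeeping of $\deter{\Sigma_i}$ powers. The only (harmless) deviation is your evaluation of $\int p_\nu^{1+\gamma}dx$ via the rescaled $t_d(\mu_1,\tfrac{\nu}{\nu-2}\Sigma_1,\nu-2)$ density and the Gamma recursion, where the paper simply reuses the moment computation with $q=p$ to get $C_{\nu,d}^\gamma\deter{\Sigma_1}^{-\gamma/2}\bigl(1+\tfrac{d}{\nu-2}\bigr)$ directly; both give the same constant.
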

\begin{proof}
We first evaluate the power integral
\begin{align*}
    \int q_\nu(x)p_\nu(x)^\gamma dx
    & =
    \expect{x \sim q_\nu}{p_\nu(x)^\gamma}\\
    & = 
    \expect{x \sim q_\nu}{
        C_{\nu, d}^\gamma 
        \deter{\Sigma_1}^{-\frac{\gamma}{2}} 
        \left(
            1 + \frac{1}{\nu} (x - \mu_1)^\top \Sigma_1^{-1} (x - \mu_1)
        \right)
    }\\
    & =
    C_{\nu, d}^\gamma 
    \deter{\Sigma_1}^{-\frac{\gamma}{2}}  
    \left(
        1 + \frac{1}{\nu} 
        \expect{x \sim q_\nu}{
            \tr\left(
                \Sigma_1^{-1} (x - \mu_1)(x - \mu_1)^\top 
            \right)
        }
    \right)\\
    & =
    C_{\nu, d}^\gamma 
    \deter{\Sigma_1}^{-\frac{\gamma}{2}}  
    \left(
        1 + \frac{1}{\nu} 
        \tr\left(
            \Sigma_1^{-1} \frac{\nu}{\nu-2}\Sigma_0+\Sigma_1^{-1}(\mu_0 - \mu_1)(\mu_0 - \mu_1)^\top 
        \right)
    \right)\\
    & =
    C_{\nu, d}^\gamma 
    \deter{\Sigma_1}^{-\frac{\gamma}{2}}
    \left(
         1 + \frac{1}{\nu - 2}\tr \left(
            \Sigma_1^{-1}\Sigma_0
        \right) + 
        \frac{1}{\nu} (\mu_0 - \mu_1)^\top \Sigma_1^{-1} (\mu_0 - \mu_1)
    \right)
\end{align*}
and similarly
\begin{equation*}
\int q_\nu(x)^{1+\gamma}dx =
    C_{\nu, d}^\gamma 
    \deter{\Sigma_0}^{-\frac{\gamma}{2}} \left(
        1 + \frac{d}{\nu-2} 
    \right), \quad \int p_\nu(x)^{1+\gamma}dx =
    C_{\nu, d}^\gamma 
    \deter{\Sigma_1}^{-\frac{\gamma}{2}} \left(
        1 + \frac{d}{\nu-2} 
    \right).
\end{equation*}
The $\gamma$-power entropy and cross-entropy may then be computed as
\begin{align*}
\hgamma{q_\nu}&=-\left(\int q_\nu(x)^{1+\gamma}dx\right)^{\frac{1}{1+\gamma}}=
    - C_{\nu, d}^\frac{\gamma}{1+\gamma}
    \deter{\Sigma_0}^{-\frac{\gamma}{2(1+\gamma)}} \left(
        1 + \frac{d}{\nu - 2}
    \right)^\frac{1}{1+\gamma},
    \\
    \cgamma{q_\nu}{p_\nu}
    & =
    - \left(
        \int q_\nu(x) p_\nu(x)^\gamma dx
    \right) \left(
        \int q(x)^{1+\gamma} dx 
    \right)^{-\frac{\gamma}{1+\gamma}} =
    - C_{\nu, d}^\frac{\gamma}{1+\gamma}
    \deter{\Sigma_1}^{-\frac{\gamma}{2(1+\gamma)}} \\
    &\qquad\times \left(
        1 + \frac{d}{\nu - 2}
    \right)^{-\frac{\gamma}{1+\gamma}} \left(
        1 + \frac{1}{\nu - 2}\tr \left(
            \Sigma_1^{-1}\Sigma_0
        \right) + 
        \frac{1}{\nu} (\mu_0 - \mu_1)^\top \Sigma_1^{-1} (\mu_0 - \mu_1)
    \right),
\end{align*}
which combine to give Equation \eqref{dgammatt}.
\end{proof}

\begin{prop}\label{proplimit}
$\gamma$-power divergence converges to KL divergence as $\gamma\to 0$ in the following cases.
\begin{enumerate}
    \item For any two fixed distributions $p,q$, $\lim_{\gamma\to 0}\dgamma{q}{p}=\dkl{q}{p}$.
    \item For $q_\nu\sim t_d(\mu_0,\Sigma_0,\nu)$ and $p_\nu\sim t_d(\mu_1,\Sigma_1,\nu)$ with $\nu = -\frac{2}{\gamma}-d$ and limiting distributions $q_\infty\sim\mathcal{N}_d(\mu_0,\Sigma_0)$ and  $p_\infty\sim\mathcal{N}_d(\mu_1,\Sigma_1)$, $\lim_{\gamma\to 0}\dgamma{q_\nu}{p_\nu}=\dkl{q_\infty}{p_\infty}$.
\end{enumerate}
\end{prop}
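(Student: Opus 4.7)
The plan is to prove both parts by direct Taylor expansion in $\gamma$ about $\gamma = 0$, exploiting the fact that the singular factor $1/\gamma$ in $\dgamma{q}{p}$ is cancelled by a corresponding vanishing of the remaining expression at leading order (a consequence of $\int q = \int p = 1$).

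For part 1, I would expand each ingredient appearing in the definition $\dgamma{q}{p} = \gamma^{-1}\norm{q}_{1+\gamma} - \gamma^{-1}\int q(x)\bigl(p(x)/\norm{p}_{1+\gamma}\bigr)^\gamma dx$. Using $p^\gamma = 1 + \gamma \log p + O(\gamma^2)$ and $\int p = 1$, one obtains $\int p^{1+\gamma}dx = 1 - \gamma H(p) + O(\gamma^2)$, hence $\norm{p}_{1+\gamma} = 1 - \gamma H(p) + O(\gamma^2)$ and $\norm{q}_{1+\gamma} = 1 - \gamma H(q) + O(\gamma^2)$. Since $\norm{p}_{1+\gamma}^{-\gamma} = 1 + O(\gamma^2)$, the integral expands as $\int q(p/\norm{p}_{1+\gamma})^\gamma dx = 1 + \gamma\,\expect{q}{\log p} + O(\gamma^2)$. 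Subtracting and dividing by $\gamma$ cancels the $1/\gamma$ pole and leaves $-\expect{q}{\log p} - H(q) + O(\gamma) = \expect{q}{\log(q/p)} + O(\gamma)$, which tends to $\dkl{q}{p}$. The only technical delicacy is justifying the term-by-term integration of the Taylor series, which holds under mild integrability of $\log p$ and $(\log p)^2$ against $p$ and $q$.

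For part 2, I would substitute $\nu = -2/\gamma - d$ directly into the closed form \eqref{dgammatt} and expand each factor in $\gamma$. The elementary expansions $1/(\nu - 2) = -\gamma/2 + O(\gamma^2)$, $1/\nu = -\gamma/2 + O(\gamma^2)$, and $\alpha := -\gamma/(2(1+\gamma)) = -\gamma/2 + O(\gamma^2)$ yield, with $c := \tr(\Sigma_1^{-1}\Sigma_0) + (\mu_0 - \mu_1)^\top \Sigma_1^{-1}(\mu_0 - \mu_1)$, the first-order expressions $A := 1 + d/(\nu-2) = 1 - \gamma d/2 + O(\gamma^2)$, $B := 1 + \tr(\Sigma_1^{-1}\Sigma_0)/(\nu-2) + (\mu_0 - \mu_1)^\top\Sigma_1^{-1}(\mu_0-\mu_1)/\nu = 1 - \gamma c/2 + O(\gamma^2)$, and $\deter{\Sigma_i}^{\alpha} = 1 - (\gamma/2)\log\deter{\Sigma_i} + O(\gamma^2)$. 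Multiplying out gives the bracket $-\deter{\Sigma_0}^\alpha A + \deter{\Sigma_1}^\alpha B = (\gamma/2)\bigl[\log(\deter{\Sigma_0}/\deter{\Sigma_1}) + d - c\bigr] + O(\gamma^2)$. The outer multiplicative prefactors $C_{\nu,d}^{\gamma/(1+\gamma)}$ and $A^{-\gamma/(1+\gamma)}$ both tend to $1$ (exponents vanish and bases remain bounded; for $C_{\nu,d}$, Stirling gives $\Gamma((\nu+d)/2)/\Gamma(\nu/2) \sim (\nu/2)^{d/2}$, hence $C_{\nu,d} \to (2\pi)^{-d/2}$, recovering the Gaussian normalizer). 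Multiplying the bracket by $-1/\gamma$ yields $\tfrac{1}{2}[\log(\deter{\Sigma_1}/\deter{\Sigma_0}) + \tr(\Sigma_1^{-1}\Sigma_0) + (\mu_0 - \mu_1)^\top\Sigma_1^{-1}(\mu_0 - \mu_1) - d]$, which is precisely $\dkl{q_\infty}{p_\infty}$.

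The main obstacle is careful bookkeeping in part 2: several $O(\gamma)$ contributions come from the scale-matrix powers, the matrix-quadratic factor $B$, and the combinatorial factor $A$, and these must combine so that the leading constant term in the bracket cancels exactly, leaving a coefficient of $\gamma$ whose structure matches Gaussian KL. A clean organization is to factor out the multiplicative prefactors first (noting they contribute only at $O(\gamma^2)$ to the bracket), then group the difference $\deter{\Sigma_1}^\alpha B - \deter{\Sigma_0}^\alpha A$ and expand term by term. The Stirling asymptotic for $C_{\nu,d}$ is standard and need only be invoked at zeroth order since the exponent $\gamma/(1+\gamma)$ already vanishes.
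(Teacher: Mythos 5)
Your proposal is correct and follows essentially the same route as the paper: part 1 is the first-order expansion of the $\gamma$-power cross-entropy and entropy about $\gamma=0$ (which the paper phrases as computing a differential coefficient), and part 2 is the direct expansion of the closed form \eqref{dgammatt} that the paper leaves as ``straightforward,'' using exactly the two facts it cites ($\gamma\nu\to-2$ and Stirling's approximation for $C_{\nu,d}$). As a minor aside, your asymptotic $C_{\nu,d}\to(2\pi)^{-d/2}$ is the correct one (the paper's stated $C_{\nu,d}\sim(2\pi^2\nu)^{-d/2}$ appears to be a typo), though either way the prefactor $C_{\nu,d}^{\gamma/(1+\gamma)}$ tends to $1$ and the conclusion is unaffected.
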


\begin{proof}
For 1, we may calculate the linearization of $\cgamma{q}{p}$ in $\gamma$ via differential coefficients,
\begin{align*}
\lim_{\gamma\to 0} \frac{\cgamma{q}{p}+1}{\gamma} &= -\int q(x) \lim_{\gamma\to 0} \frac{1}{\gamma} \left[ \left(\frac{p(x)}{\norm{p}_{1+\gamma}}\right)^\gamma -1\right] dx\\ 
&= -\int q(x) \frac{\partial}{\partial\gamma}\bigg|_{\gamma=0} \left(\frac{p(x)}{\norm{p}_{1+\gamma}}\right)^\gamma dx\\
&=-\int q(x)\log p(x)dx =\mathcal{C}(q,p)
\end{align*}
where $\mathcal{C}(q,p)$ is the ordinary cross-entropy. The same relation holds for entropy when $p=q$. Thus,
\begin{equation*}
\lim_{\gamma\to 0}\dgamma{q}{p}=\lim_{\gamma\to 0}\frac{\cgamma{q}{p}+1}{\gamma}-\frac{\hgamma{q}+1}{\gamma} = \mathcal{C}(q,p)-\mathcal{H}(q)=\dkl{q}{p}.
\end{equation*}
For 2, it is straightforward to show directly that Equation \eqref{dgammatt} converges to
\begin{equation*}
\dkl{q_\infty}{p_\infty} = \frac{1}{2}\left(\log\frac{\deter{\Sigma_1}}{\deter{\Sigma_0}}-d +\tr \left( \Sigma_1^{-1}\Sigma_0 \right) +(\mu_0 - \mu_1)^\top\Sigma_1^{-1} (\mu_0 - \mu_1)\right)
\end{equation*}
by noting that $C_{\nu,d}\sim (2\pi^2\nu)^{-d/2}$ from Stirling's approximation and $\gamma\nu\to -2$.
\end{proof}

\section{$t^3$VAE Computations}\label{appendixcomputations}

\subsection{Derivation of Joint Model and Bayesian View}\label{appendixjoint}
The joint model distribution \eqref{ttVAEmodeldist} of $t^3$VAE is explicitly defined as 
\begin{equation*}
    p_{\theta, \nu} (x,z) = 
    C_{\nu, m+n} \sigma^{-n} \left[
        1 + \frac{1}{\nu} \left(
            \norm{z}^2 + 
            \frac{1}{\sigma^2}
            \norm{x - \mu_\theta(z)}^2
        \right)
    \right]^{-\frac{\nu+m+n}{2}}. 
\end{equation*}
We may retrieve the prior \eqref{prior} by marginalizing out $x$, which also confirms $p_{\theta, \nu}$ is a valid density:
\begin{align*}
&p_{Z, \nu}(z) = \int p_{\theta, \nu}(x,z) dx\\
&= C_{\nu, m+n} \sigma^{-n} \left(1+\frac{1}{\nu}\norm{z}^2\right)^{-\frac{\nu+m+n}{2}} \int \left(1+\frac{1+\nu^{-1}m}{1+\nu^{-1}\norm{z}^2}\frac{\norm{x-\mu_\theta(z)}^2}{(\nu+m)\sigma^2}\right)^{-\frac{\nu+m+n}{2}} dx\\
&= C_{\nu, m+n} \sigma^{-n} \left(1+\frac{1}{\nu}\norm{z}^2\right)^{-\frac{\nu+m+n}{2}} C_{\nu+m,n}^{-1} \left(\frac{1+\nu^{-1}\norm{z}^2}{1+\nu^{-1}m} \sigma^2 \right)^{\frac{n}{2}} \\
&= C_{\nu, m+n} C_{\nu+m,n}^{-1} \left(1+\frac{m}{\nu}\right)^{-\frac{n}{2}} \left(1+\frac{1}{\nu}\norm{z}^2\right)^{-\frac{\nu+m}{2}} \\
&= C_{\nu,m} \left(1+\frac{1}{\nu}\norm{z}^2\right)^{-\frac{\nu+m}{2}} \\
&= t_m(z|0,I,\nu),
\end{align*}
where we have used the fact that
\medskip
\begin{align*}
C_{\nu, m+n}= \frac{\Gamma\left(\frac{\nu+m+n}{2}\right)}{\Gamma\left(\frac{\nu+m}{2}\right) ((\nu+m)\pi)^{\frac{n}{2}}} \frac{\Gamma\left(\frac{\nu+m}{2}\right)}{\Gamma\left(\frac{\nu}{2}\right) (\nu\pi)^{\frac{m}{2}}} \left(1+\frac{m}{\nu}\right)^\frac{n}{2} = C_{\nu+m,n}C_{\nu,m} \left(1+\frac{m}{\nu}\right)^\frac{n}{2}.
\end{align*}
Consequently, the $t^3$VAE decoder is derived as
\begin{align*}
    p_{\theta, \nu}(x|z) &= 
    \frac{p_{\theta, \nu}(x,z)}{p_{Z, \nu}(z)} \\
    &= C_{\nu+m,n} \sigma^{-n} \left(\frac{1+\nu^{-1}\norm{z}^2}{1+\nu^{-1}m}\right)^{-\frac{n}{2}} \left(1+\frac{1+\nu^{-1}m}{1+\nu^{-1}\norm{z}^2}\frac{\norm{x-\mu_\theta(z)}^2}{(\nu+m)\sigma^2}\right)^{-\frac{\nu+m+n}{2}}\\
    &=t_n \left( x \left| \mu_\theta(z), \frac{1 + \nu^{-1}\norm{z}^2}{1 + \nu^{-1}m} \sigma^2 I,\nu+m \right.\right).
\end{align*}

We now prove that the prior-decoder pair is equivalent to the Bayesian hierarchical model \eqref{bayesian}. By integrating out the latent precision:
\begin{align*}
z&\sim \int_0^\infty \mathcal N_m \left( z \left\vert 0, \frac{1}{\nu^{-1}\lambda} I\right.\right) \chi^2(\lambda\vert\nu) d\lambda\\
&\propto\int_0^\infty \exp\left(-\frac{\lambda}{2\nu} \norm{z}^2 -\frac{\lambda}{2}\right) \lambda^{\frac{\nu}{2}+\frac{m}{2}-1} d\lambda\\
&\propto \left(1+\frac{1}{\nu}\norm{z}^2 \right)^{-\frac{\nu+m}{2}},
\end{align*}
and
\begin{align*}
x|z&\sim \int_0^\infty \mathcal N_n \left(x\left\vert\mu_\theta(z), \frac{1}{\nu^{-1}\lambda}\sigma^2 I \right.\right) \mathcal N_m \left( z \left\vert 0, \frac{1}{\nu^{-1}\lambda} I\right.\right) \chi^2(\lambda\vert\nu) d\lambda\\
&\propto\int_0^\infty \exp\left(-\frac{\lambda}{2\nu\sigma^2}\norm{x-\mu_\theta(z)}^2-\frac{\lambda}{2\nu} \norm{z}^2 -\frac{\lambda}{2}\right) \lambda^{\frac{\nu}{2}+\frac{m}{2}+\frac{n}{2}-1} d\lambda\\
&\propto \left(1+\frac{1}{\nu}\norm{z}^2 +\frac{1}{\nu\sigma^2}\norm{x-\mu_\theta(z)}^2 \right)^{-\frac{\nu+m+n}{2}} \\
&\propto t_n \left( x \left| \mu_\theta(z), \frac{1 + \nu^{-1}\norm{z}^2}{1 + \nu^{-1}m} \sigma^2 I,\nu+m \right.\right),
\end{align*}
we recover the $t^3$VAE architecture.

\subsection{Shallow $t^3$VAE}\label{appendixshallow}

We consider the simplest case when the decoder mean $\mu_\theta(z)=Wz+b$ is linear with parameters $\theta=(W,b)$, $W\in\mathbb{R}^{n\times m}$, $b\in\mathbb{R}^n$, which we call the `shallow' $t^3$VAE. For completeness, we first describe the corresponding shallow Gaussian VAE as well. In this case, the joint model \eqref{VAEmodeldist} with $\mu_\theta(z)=Wz+b$ is easily checked to be normally distributed as
\begin{equation*}
\begin{pmatrix} x\\z\end{pmatrix} \sim \mathcal{N}_{m+n}\left(\begin{pmatrix} b\\0\end{pmatrix}, \begin{pmatrix} WW^\top+\sigma^2I & W\\W^\top & I \end{pmatrix} \right)
\end{equation*}
and the true posterior can be obtained exactly as
\begin{equation*}
z|x\sim \mathcal{N}_m\left(W^\top(WW^\top+\sigma^2I)^{-1}(x-b), I-W^\top (WW^\top+\sigma^2I)^{-1}W\right).
\end{equation*}
On the other hand, the joint model \eqref{ttVAEmodeldist} for the shallow $t^3$VAE is t-distributed:
\begin{align*}
\begin{pmatrix} x\\z\end{pmatrix} &\propto \left[1+\frac{1}{\nu\sigma^2} \begin{pmatrix} x-b\\z\end{pmatrix}^\top \begin{pmatrix} I&-W\\-W^\top& W^\top W+\sigma^2I \end{pmatrix} \begin{pmatrix} x-b\\z\end{pmatrix}\right]^{-\frac{\nu+m+n}{2}}\\
&\propto t_{m+n}\left(\begin{pmatrix} b\\0\end{pmatrix}, \begin{pmatrix} WW^\top+\sigma^2I & W\\ W^\top & I \end{pmatrix},\nu \right).
\end{align*}
Then the true posterior is in fact also t-distributed, but with degrees of freedom $\nu+n$ rather than $\nu$ \citep{ding2016conditional}: $z|x\sim t_m(\tilde{\mu}(x), \tilde{\Sigma}(x), \nu+n)$ with mean
\begin{equation*}
\tilde{\mu}(x) = W^\top (WW^\top+\sigma^2I)^{-1}(x-b)
\end{equation*}
and scale matrix
\begin{equation*}
\tilde{\Sigma}(x) = \frac{1+ \nu^{-1}(x-b)^\top (WW^\top+\sigma^2I)^{-1}(x-b)}{1+\nu^{-1}n} (I-W^\top (WW^\top+\sigma^2I)^{-1}W).
\end{equation*}
This motivates the definition of the $t^3$VAE encoder \eqref{encoder}.

\subsection{Derivation of $\gamma$-loss}\label{appendixloss}

Our goal is to derive the $\gamma$-power divergence between a point $p_{\theta, \nu}(x,z)$ on the model distribution manifold $\mathcal{P}_\nu$, given by Equation \eqref{ttVAEmodeldist}, and a point on the data distribution manifold $\mathcal{Q}_\nu$, given by
\begin{equation*}
q_{\phi,\nu}(x,z)=p_{\data}(x) \times t_m \left(
        z \left|
        \mu_\phi(x), 
        \frac{1}{1 + \nu^{-1}n} \Sigma_\phi(x), 
        \nu+n
    \right.\right).
\end{equation*}
We begin by computing the required double integrals. First,
\begin{align*}
    &\iint p_{\theta, \nu}(x,z)^{1+\gamma} dxdz 
    =
    \eexpect{z \sim p_{Z,\nu}}{x \sim p_{\theta, \nu}(\cdot|z)}{p_{\theta, \nu}(x,z)^\gamma}
    \nonumber\\
    & =
    \sigma^{-\gamma n}
    C_{\nu, m+n}^\gamma
    \eexpect{
        z \sim p_{Z,\nu}
    }{
        x \sim p_{\theta, \nu}(\cdot|z)
    }{
        1 + \frac{1}{\nu} \left(
            \norm{z}^2 + 
            \frac{1}{\sigma^2} \norm{x - \mu_\theta(z)}^2
        \right)
    }
    \nonumber\\
    & = 
    \sigma^{-\gamma n}
    C_{\nu, m+n}^\gamma
    \expect{
        z \sim p_{Z,\nu}
    }{
        1 + 
        \frac{1}{\nu} \norm{z}^2 + 
        \frac{1}{\nu\sigma^2} 
        \tr \left(
            \frac{\nu+m}{\nu+m-2} \cdot
            \frac{1 + \nu^{-1}\norm{z}^2}{1 + \nu^{-1}m} \sigma^2 I
        \right)
    }
    \nonumber\\
    & = 
    \sigma^{-\gamma n}
    C_{\nu, m+n}^\gamma
    \expect{
        z \sim p_{Z,\nu}
    }{
        1 + 
        \frac{1}{\nu} \norm{z}^2 + 
        \frac{n}{\nu+m-2} \left(
            1 + \frac{1}{\nu} \norm{z}^2
        \right)
    }
    \nonumber\\
    & = 
    \sigma^{-\gamma n}
    C_{\nu, m+n}^\gamma
    \left(
        1 + \frac{m+n}{\nu-2}
    \right),
\end{align*}
where we have repeatedly used the fact that
\begin{equation*}
\mathbb{E}_{x\sim t_d(\mu,\Sigma,\nu)} \norm{x-\mu}^2 = \tr \mathbb{E}_{x\sim t_d(\mu,\Sigma,\nu)} [(x-\mu)(x-\mu)^\top] = \frac{\nu}{\nu-2}\tr\Sigma.
\end{equation*}
Next, using the entropy computations in the proof of Proposition \ref{propdgammatt},
\begin{align*}
    &\iint q_{\phi, \nu}(x, z)^{1+\gamma} dxdz 
    =
    \int \left(
        \int q_{\phi, \nu}(z|x)^{1+\gamma} dz
    \right) p_{\data}(x)^{1+\gamma} dx
    \nonumber\\
    & =
    C_{\nu+n, m}^\gamma 
    \left(
        1 + \frac{n}{\nu}
    \right)^\frac{\gamma m}{2}
    \left(
        1 + \frac{m}{\nu + n - 2}
    \right)
    \int \lvert \Sigma_\phi(x) \rvert^{-\frac{\gamma}{2}} p_{\data}(x)^{1+\gamma} dx.
\end{align*} 

Moreover, we have
\begin{align*}
    &\iint q_{\phi, \nu}(x, z) 
    p_{\theta, \nu}(x, z)^\gamma dxdz
    =
    \mathbb{E}_{x \sim p_{\data}}
    \mathbb{E}_{z \sim q_{\phi, \nu}(\cdot|x)} \left[
        p_{\theta, \nu}(x,z)^\gamma
    \right]
    \nonumber\\
    & =
    \sigma^{-\gamma n} C_{\nu, m+n}^\gamma
    \mathbb{E}_{x \sim p_{\data}}
    \mathbb{E}_{z \sim q_{\phi, \nu}(\cdot|x)} \left[
        1 + \frac{1}{\nu} \left(
            \lVert z \rVert^2 + 
            \frac{1}{\sigma^2}
            \lVert x - \mu_\theta(z) \rVert^2
        \right)
    \right]
    \nonumber\\
    & =
    \sigma^{-\gamma n} C_{\nu, m+n}^\gamma
    \mathbb{E}_{x \sim p_{\data}} \biggl[
        1 + \frac{1}{\nu}  \biggl(
            \norm{\mu_\phi(x)}^2\\
            &\qquad +\frac{\nu}{\nu+n - 2} \tr\Sigma_\phi(x) + \frac{1}{\sigma^2}
            \mathbb{E}_{z \sim q_{\phi, \nu}(\cdot|x)}
            \norm{x - \mu_\theta(z)}^2
        \biggr)
    \biggr],
\end{align*}
by utilizing the sum-of-squares decomposition $\norm{z}^2=\norm{\mu_\phi(x)+(z-\mu_\phi(x))}^2$. The $\gamma$-power entropy of the joint data distribution is then obtained as
\begin{align*}
    &\hgamma{q_{\phi, \nu}}
    =
    - \left(
        \iint q_{\phi, \nu}(x, z)^{1+\gamma} dxdz 
    \right)^\frac{1}{1+\gamma}
    \nonumber\\
    & =
    - \underbrace{
        C_{\nu + n, m}^\frac{\gamma}{1+\gamma}
        \left(
            1 + \frac{n}{\nu}
        \right)^\frac{\gamma m}{2(1+\gamma)}
        \left(
            1 + \frac{m}{\nu +n - 2}
        \right)^\frac{1}{1+\gamma} 
    }_{=:\,C_1}
    \left(
        \int \lvert \Sigma_\phi(x) \rvert^{-\frac{\gamma}{2}} p_{\data}(x)^{1+\gamma} dx
    \right)^\frac{1}{1+\gamma},
\end{align*} 
and the $\gamma$-power cross-entropy is
\begin{align*}
    &\cgamma{q_{\phi, \nu}}{p_{\theta, \nu}}
    =
    -\left(
        \iint q_{\phi, \nu}(x, z) 
        p_{\theta, \nu}(x, z)^\gamma dxdz
    \right) \left(
        \iint p_{\theta, \nu}(x, z)^{1+\gamma} dxdz
    \right)^{-\frac{\gamma}{1+\gamma}}
    \nonumber\\
    &=
    -\underbrace{ \left(
        \sigma^{-\gamma n} C_{\nu, m+n}^\gamma\right)^\frac{1}{1+\gamma}
        \left(
            1 + \frac{m+n}{\nu - 2}
    \right)^{-\frac{\gamma}{1+\gamma}} 
    }_{=:\,C_2} \\
    &\qquad\times 
    \mathbb{E}_{x \sim p_{\data}} \left[
        1 + \frac{1}{\nu}  \left(
            \frac{1}{\sigma^2}
            \mathbb{E}_{z \sim q_{\phi, \nu}(\cdot|x)}
            \norm{x - \mu_\theta(z)}^2 + 
            \norm{\mu_\phi(x)}^2 + 
            \frac{\nu}{\nu+n - 2} \tr\Sigma_\phi(x)
        \right)
    \right].
\end{align*} 
Substituting the above two expressions in the definition of $\gamma$-power divergence \eqref{dgammadef} yields the formula
\begin{equation*}
    \begin{split}
        & \dgamma{q_{\phi,\nu}}{p_{\theta, \nu}}
        =  \frac{C_1}{\gamma}
        \left(
            \int \deter{\Sigma_\phi(x)}^{-\frac{\gamma}{2}} p_{\data}(x)^{1+\gamma} dx
        \right)^\frac{1}{1+\gamma}
         -\frac{C_2}{\gamma} \\
        &\times \expect{x \sim p_{\data}}{
            1 +\frac{1}{\nu}\left( 
            \frac{1}{\sigma^2}
            \mathbb{E}_{z \sim q_{\phi, \nu}(\cdot|x)}
            \lVert x - \mu_\theta(z) \rVert^2 + 
            \lVert \mu_\phi(x) \rVert^2 + 
            \frac{\nu}{\nu+n - 2} \tr\Sigma_\phi(x)\right)
        }.
    \end{split}
\end{equation*}

We now provide justification for the approximation
\begin{equation*}
    \left( 
        \int \deter{\Sigma_\phi(x)}^{-\frac{\gamma}{2}} 
        p_{\data} (x)^{1+\gamma} dx 
    \right)^{\frac{1}{1+\gamma}} 
    \simeq 
    \int \deter{\Sigma_\phi(x)}^{-\frac{\gamma}{2(1+\gamma)}} 
    p_{\data} (x) dx - H(p_{\data})\cdot\gamma 
\end{equation*}
which is valid up to first order when $|\gamma|\ll 1$. Note that the linear coefficient $\mathcal{H}(p_{\data})$ is independent of parameters $\theta$ and $\phi$, leading to the $\gamma$-loss \eqref{gammaloss} after rearranging constants:
\begin{equation*}
\mathcal{L}_\gamma(\theta, \phi) \approx -\frac{\gamma\nu}{2C_2}\dgamma{q_{\phi, \nu}}{p_{\theta, \nu}} -\frac{\gamma \nu C_1}{2C_2}  \mathcal{H}(p_{\data}) - \frac{\nu}{2}.
\end{equation*}

\begin{prop}\label{propfirstorder}
Let $\sigma$ be any positive continuous function, $\gamma\in (-1,0)$, and suppose the values
$$H_{j,k}:=\mathcal{H}_{j,k}(p_{\data}):=\int p_{\data}(x)^{1+j\gamma} \deter{\log p_{\data}(x)}^k dx$$
are finite for each $j\in \{0,1\}$, $k\in \{1,2\}$. Then for any compact set $\Omega\subseteq \supp p_{\data}$,
\begin{equation*}
\begin{gathered}
\left( \int_\Omega \sigma(x)^{-\gamma} p_{\data} (x)^{1+\gamma} dx \right)^{\frac{1}{1+\gamma}} - \int_\Omega \sigma(x)^{-\frac{\gamma}{1+\gamma}} p_{\data} (x) dx \\= \gamma \int_\Omega p_{\data}(x)\log p_{\data}(x)dx + O(\gamma^2).
\end{gathered}
\end{equation*}
\end{prop}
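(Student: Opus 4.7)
My proposal is to Taylor expand both terms on the left-hand side in $\gamma$ about $\gamma=0$, verify that the $\log\sigma$ contributions cancel between the two integrals, and extract the surviving $\int p_{\data}\log p_{\data}\,dx$ term as the first-order coefficient. I would rewrite the power integrands as exponentials, $\sigma^{-\gamma}p_{\data}^{1+\gamma}=p_{\data}\exp[\gamma(\log p_{\data}-\log\sigma)]$ and $\sigma^{-\gamma/(1+\gamma)}=\exp[-\tfrac{\gamma}{1+\gamma}\log\sigma]$, and handle the outer $1/(1+\gamma)$-th power via $t\mapsto\exp(\tfrac{1}{1+\gamma}\log t)$.

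Concretely, let $A=\int_\Omega p_{\data}\,dx$ and define $I_1(\gamma)=\int_\Omega\sigma^{-\gamma}p_{\data}^{1+\gamma}\,dx$ and $I_2(\gamma)=\int_\Omega\sigma^{-\gamma/(1+\gamma)}p_{\data}\,dx$. Differentiating under the integral sign at $\gamma=0$ yields $I_1(0)=I_2(0)=A$, $I_1'(0)=\int_\Omega p_{\data}(\log p_{\data}-\log\sigma)\,dx$, and $I_2'(0)=-\int_\Omega p_{\data}\log\sigma\,dx$. Logarithmically expanding $J_1(\gamma):=I_1(\gamma)^{1/(1+\gamma)}=\exp[\tfrac{1}{1+\gamma}\log I_1(\gamma)]$ to first order gives $J_1(\gamma)=A+\gamma(I_1'(0)-A\log A)+O(\gamma^2)$. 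Subtracting $I_2(\gamma)=A+\gamma I_2'(0)+O(\gamma^2)$ produces a difference of $\gamma\int_\Omega p_{\data}\log p_{\data}\,dx-\gamma A\log A+O(\gamma^2)$; the $\log\sigma$ terms cancel exactly, which is the rationale behind pairing the exponents $\gamma$ and $\gamma/(1+\gamma)$. In the application of interest, where $\Omega$ is taken to exhaust $\supp p_{\data}$, one has $A\to 1$ and the stray $A\log A$ vanishes, matching the displayed expansion.

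The main obstacle is justifying the Taylor expansion to second order with a remainder that is uniformly $O(\gamma^2)$ for $\gamma$ in a neighborhood of zero. Continuity and positivity of $\sigma$ on compact $\Omega$ make $|\log\sigma|$ and $\sigma^{-\gamma/(1+\gamma)}$ bounded there, so $\sigma$-dependent contributions cause no integrability issue. The delicate terms are $p_{\data}^{1+\gamma}|\log p_{\data}|^k$ for $k=1,2$, arising as successive $\gamma$-derivatives of $p_{\data}^{1+\gamma}$; the hypotheses $H_{1,k}<\infty$ furnish the requisite dominating functions in a neighborhood of $\gamma=0$, enabling Taylor's theorem with integral remainder (equivalently, dominated convergence applied to the difference quotient of $I_1'$) to be passed inside the integral, yielding the uniform $O(\gamma^2)$ bound.
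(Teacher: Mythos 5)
Your proposal is correct and follows essentially the same route as the paper's proof: Taylor expansion of both integrals in $\gamma$, cancellation of the $\log\sigma$ contributions between the two terms (the reason for pairing the exponents $-\gamma$ and $-\gamma/(1+\gamma)$), and control of the second-order remainder via the $H_{j,k}$ hypotheses together with the boundedness of $\log\sigma$ on the compact set $\Omega$ (the paper makes the domination explicit by splitting the remainder integral over $\{h\geq 1\}$ and $\{h<1\}$, which requires both $H_{0,k}$ and $H_{1,k}$, not only the $H_{1,k}$ you cite). You are in fact slightly more careful than the paper in tracking the mass $A=\int_\Omega p_{\data}\,dx$ and the resulting $-\gamma A\log A$ term, which the paper's displayed expansions silently set to zero by treating $\Omega$ as carrying full probability mass.
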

\textit{Remark.} The assumption on $\sigma$ is valid since variance is typically trained by setting $\log\Sigma_\phi(x)$ as a neural network. The integrability condition on $p_{\data}$ is slightly stronger than finite entropy and holds for e.g. $d$-dimensional Gaussian and t-distributions (when $\nu>2$ and $\gamma=-\frac{2}{\nu+d}$).

\begin{proof}
Denote by $\sigma_{\min}$, $\sigma_{\max}$ the minimum and maximum of $\sigma$ on $\Omega$, respectively. We begin by linearizing the first term. Set $h(x):=\sigma(x)^{-1}p_{\data}(x)$. For each $x\in\Omega$, we may Taylor expand
$$h(x)^\gamma = 1+\gamma\log h(x)+ \frac{1}{2}\gamma^2\cdot (\log h(x))^2 h(x)^{\gamma^*(x)}$$
for some $\gamma^*(x)\in (\gamma,0)$, and the integral of the remainder is bounded since
\begin{align*}
&\int_\Omega (\log h(x))^2 h(x)^{\gamma^*(x)} p_{\data}(x)dx\\
&\leq \int_{\Omega\cap \{h\geq 1\}} (\log h(x))^2 p_{\data}(x)dx + \int_{\Omega\cap \{h< 1\}} (\log h(x))^2 h(x)^\gamma p_{\data}(x)dx \\
&\leq \int_{\Omega\cap \{h\geq 1\}} (\log p_{\data}(x)-\log \sigma_{\min})^2 p_{\data}(x)dx \\
&\qquad + \sigma_{\max}^{-\gamma} \int_{\Omega\cap \{h< 1\}} (\log p_{\data}(x)-\log\sigma_{\max})^2 p_{\data}(x)^{1+\gamma}dx \\
&\leq (\log\sigma_{\min})^2 + 2\deter{\log\sigma_{\min}}H_{0,1}+H_{0,2}+\sigma_{\max}^{-\gamma} \left( (\log\sigma_{\max})^2 + 2\deter{\log\sigma_{\max}}H_{1,1}+H_{1,2} \right)\\
&<\infty.
\end{align*}
Therefore, we obtain the approximation
$$\left( \int_\Omega h(x)^\gamma p_{\data} (x) dx \right)^{\frac{1}{1+\gamma}} = 1+\gamma\int_\Omega \log h(x) \cdot p_{\data}(x)dx + O(\gamma^2).$$
For the second term, setting $h(x)=\sigma(x)^{-1}$ and linearizing with respect to $\eta:=\frac{\gamma}{1+\gamma}$, we have
$$\int_\Omega \sigma(x)^{-\eta}p_{\data}(x)dx = 1-\eta \int_\Omega \log\sigma(x) \cdot p_{\data}(x)dx + O(\eta^2)$$
with the remainder bounded by $\eta^2/2\cdot \max\{(\log\sigma_{\max})^2, (\log\sigma_{\min})^2\}\max\{1,\sigma_{\min}^{-1}\}$. Plugging in $\eta=\gamma+O(\gamma^2)$ and subtracting both sides gives the desired approximation.
\end{proof}

\subsection{Derivation of $\gamma$-regularizer}\label{appendixtau}

We compute the $\gamma$-power divergence from the encoder distribution \eqref{encoder} to the alternative prior
\begin{equation*}
p_\nu^\star(z)=t_m(z|0,\tau^2 I, \nu+n),
\end{equation*}
where the constant $\tau$ is yet to be determined. By Equation \eqref{dgammatt}, we have for $\gamma=-\frac{2}{\nu+m+n}$,
\begin{align*}
\begin{split}
&\dgamma{q_{\phi,\nu}}{p_\nu^\star} =\\
&-\frac{1}{\gamma} C_{\nu+n, m}^\frac{\gamma}{1+\gamma}
    \left(
        1 + \frac{m}{\nu+n-2}
    \right)^{-\frac{\gamma}{1+\gamma}} \biggl[\,
-\left( 1+\frac{n}{\nu}\right)^{\frac{\gamma m}{2(1+\gamma)}} \deter{\Sigma_\phi(x)}^{-\frac{\gamma}{2(1+\gamma)}}
        \left(
            1 + \frac{m}{\nu+n-2}
        \right) \\
        &+ \frac{1}{\nu+n}\tau^{-2-\frac{\gamma m}{1+\gamma}}
        \left(
             \norm{\mu_\phi(x)}^2 + \frac{\nu}{\nu+n - 2}\tr \Sigma_\phi(x) + (\nu+n)\tau^2 
        \right)
    \biggr].
\end{split}
\end{align*}
Note that the coefficient of $\tr\Sigma_\phi(x)$ relative to the $\norm{\mu_\phi(x)}^2$ term is $\frac{\nu}{\nu+n-2}$, aligning with the $\gamma$-loss formula \eqref{gammaloss}. It remains to choose $\tau$ to match the coefficient $-\frac{\nu C_1}{C_2}$ of the $\deter{\Sigma_\phi(x)}^{-\frac{\gamma}{2(1+\gamma)}}$ term:
\begin{equation*}
\begin{gathered}
-(\nu+n)\tau^{2+\frac{\gamma m}{1+\gamma}} \left( 1+\frac{n}{\nu}\right)^{\frac{\gamma m}{2(1+\gamma)}}
        \left(
            1 + \frac{m}{\nu+n-2}
        \right)
= -\frac{\nu C_1}{C_2} \\
=-\nu \left(\frac{\nu+m+n-2}{\nu + n - 2}\left(1 + \frac{n}{\nu}\right)^{\frac{\gamma m}{2}} C_{\nu+n, m}^\gamma\right)^\frac{1}{1+\gamma} \left(\frac{\nu+m+n-2}{\nu - 2} \sigma^n C_{\nu,m+n}^{-1}\right)^{\frac{\gamma}{1+\gamma}},
\end{gathered}
\end{equation*}
from which we obtain
\begin{equation*}
\tau^2 = \frac{1}{1+\nu^{-1}n} \left( \sigma^{-n}C_{\nu,n}\left(1+\frac{n}{\nu-2}\right)^{-1}\right)^{\frac{2}{\nu+n-2}}.
\end{equation*}
Furthermore, the precise multiplicative factor required to match the $\gamma$-loss is (again looking at the $\norm{\mu_\phi(x)}^2$ term)
\begin{align*}
&\frac{1}{2}\times\left[ -\frac{1}{\gamma} C_{\nu+n, m}^\frac{\gamma}{1+\gamma}
    \left( 1 + \frac{m}{\nu+n-2} \right)^{-\frac{\gamma}{1+\gamma}} \frac{1}{\nu+n}\tau^{-2-\frac{\gamma m}{1+\gamma}}\right]^{-1}\\
    &=-\frac{\gamma\nu}{2} C_{\nu+n, m}^{-\frac{\gamma}{1+\gamma}} \left( 1 + \frac{m}{\nu+n-2} \right)^\frac{\gamma}{1+\gamma} \left(1+\frac{n}{\nu}\right)^{-\frac{\gamma m}{2(1+\gamma)}} \sigma^\frac{\gamma n}{1+\gamma} C_{\nu,n}^{-\frac{\gamma}{1+\gamma}} \left(1+\frac{n}{\nu-2}\right)^\frac{\gamma}{1+\gamma} \\
    &= -\frac{\gamma\nu}{2} C_{\nu, m+n}^{-\frac{\gamma}{1+\gamma}} \left(1+\frac{m+n}{\nu-2}\right)^\frac{\gamma}{1+\gamma} \sigma^\frac{\gamma n}{1+\gamma} \\
    &= -\frac{\gamma\nu}{2C_2},
\end{align*}
which is positive, implying that $\dgamma{q_{\phi,\nu}}{p_\nu^\star}$ indeed acts as a regularizer. Finally, the additive constant in Equation \eqref{gammalossnew} is equal to $-\frac{1}{2} (\nu+n)\tau^2$.

\subsection{Constructing the $t^3$HVAE}\label{appendixhier}

\begin{figure}
    \centering
    \includegraphics[width = 0.6\textwidth]{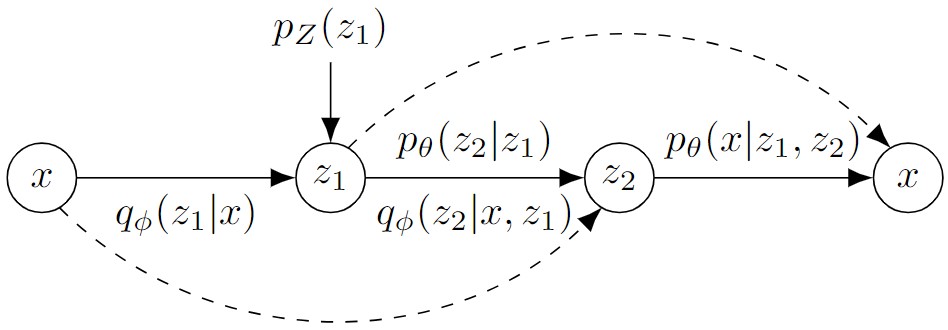}
    \caption{Structure of hierarchical $t^3$VAE.}
    \label{fig:twolevel}
\end{figure}

We develop the heavy-tailed version of the two-level hierarchical VAE \citep{sonderby2016ladder} with latent variables $(z_1,z_2)\in\mathbb{R}^{m_1+m_2}$, depicted in Figure \ref{fig:twolevel}. In the generative phase, $z_1$ is sampled from the initial prior $p_{Z,\nu}$, then $z_2$ is drawn from the conditional prior $p_{\theta,\nu}(z_2|z_1)$ and $x$ is drawn from the decoder $p_{\theta,\nu}(x|z_1,z_2)$. In the encoding phase, $z_1$ and $z_2$ are successively drawn from the hierarchical encoders $q_{\phi,\nu}(z_1|x)$ and $q_{\phi,\nu}(z_2|x,z_1)$, respectively. We remark that the second level decoder and encoder can also depend through skip connections on $z_1$ and $x$. Moreover, we assume two levels only for simplicity, and $t^3$HVAE can be readily expanded to any hierarchical depth.

The Bayesian scheme \eqref{bayesian} naturally extends to
\begin{gather*}\label{bayesianhier}
    \lambda \sim \chi^2(\lambda\vert\nu),\quad
    z_1 | \lambda \sim
    \mathcal N_{m_1}
    ( 
        z_1
        \vert
        0, 
        \nu\lambda^{-1} I
    ),
    \\
    z_2 | z_1,\lambda \sim
    \mathcal N_{m_2}(
        z_2
        \vert
        \zeta_\theta(z_1), 
        \nu\lambda^{-1}
        \sigma_z^2 I
    ), \quad 
    x | z_1,z_2,\lambda \sim
    \mathcal N_n(
        z
        \vert
        \mu_\theta(z_1,z_2), 
        \nu\lambda^{-1}
        \sigma_x^2 I
    ),
\end{gather*}

which likewise to the derivation in Appendix \ref{appendixjoint} integrates to give the t-priors and t-decoder:
\begin{align*}
p_{Z,\nu}(z_1) &=t_{m_1}(z|0,I,\nu),\\
p_{\theta,\nu}(z_2|z_1) &=t_{m_2} \left(
        z_2 \left|
        \zeta_\theta(z_1), 
        \frac{1 + \nu^{-1}\norm{z_1}^2}{1 + \nu^{-1}m_1} \sigma_z^2 I, 
        \nu + m_1
    \right.\right),\\
    p_{\theta,\nu}(x|z_1, z_2) &=t_n \left(
        x \left|
        \mu_\theta(z_1,z_2), 
        \frac{1 + \nu^{-1}\!\norm{z_1}^2+\nu^{-1}\sigma_z^{-2}\!\norm{z_2-\zeta_\theta(z_1)}^2}{1 + \nu^{-1}(m_1+m_2)} \sigma_x^2 I, 
        \nu+ m_1+m_2\!
    \right.\right)\!.
\end{align*}
We also define the t-encoder distributions as
\begin{align*}
q_{\phi,\nu} (z_1|x) &=
    t_{m_1} \left(
        z_1 \left|
        \zeta_\phi(x), 
        \frac{1}{1 + \nu^{-1}n} \Lambda_\phi(x), 
        \nu+n
    \right.\right),\\
q_{\phi,\nu} (z_2|x,z_1) &=
    t_{m_2}\left(
        z_2 \left|
        \mu_\phi(x,z_1), 
        \frac{1}{1 + \nu^{-1}(m_1+n)} \Sigma_\phi(x,z_1), 
        \nu+m_1+n
    \right.\right).
\end{align*}

Defining $\gamma=-\frac{2}{\nu+m_1+m_2+n}$, the joint model distribution is derived as
\begin{align*}
&p_{\theta,\nu}(x,z_1,z_2) = p_{Z,\nu}(z_1)p_{\theta,\nu}(z_2|z_1)p_{\theta,\nu}(x|z_1, z_2)\\
&= C_{\nu,m_1} \left(1+\frac{1}{\nu}\norm{z_1}^2\right)^{-\frac{\nu+m_1}{2}}\\
&\qquad \times C_{\nu+m_1,m_2} \left(\frac{1 + \nu^{-1}\norm{z_1}^2}{1 + \nu^{-1}m_1} \sigma_z^2\right)^{-\frac{m_2}{2}} \left(1+\frac{1}{\nu\sigma_z^2}\frac{\norm{z_2-\zeta_\theta(z_1)}^2}{1+\nu^{-1}\norm{z_1}^2}\right)^{-\frac{\nu+m_1+m_2}{2}}\\
&\qquad\times C_{\nu+m_1+m_2,n} \left(\frac{1 + \nu^{-1}\norm{z_1}^2+\nu^{-1}\sigma_z^{-2}\norm{z_2-\zeta_\theta(z_1)}^2}{1 + \nu^{-1}(m_1+m_2)} \sigma_x^2\right)^{-\frac{n}{2}}\\
&\qquad\times \left(1+\frac{1}{\nu\sigma_x^2}\frac{\norm{x-\mu_\theta(z_2)}^2}{1 + \nu^{-1}\norm{z_1}^2+\nu^{-1}\sigma_z^{-2}\norm{z_2-\zeta_\theta(z_1)}^2}\right)^{-\frac{\nu+m_1+m_2+n}{2}}\\
&= C_{\nu,m_1}C_{\nu+m_1,m_2}C_{\nu+m_1+m_2,n} \sigma_z^{-m_2}\sigma_x^{-n} \left(1+\frac{m_1}{\nu}\right)^{\frac{m_2}{2}} \left(1+\frac{m_1+m_2}{\nu}\right)^{\frac{n}{2}}\\
&\qquad\times \left(1+\frac{\norm{z_1}^2}{\nu} +\frac{\norm{z_2-\zeta_\theta(z_1)}^2}{\nu\sigma_z^2} +\frac{\norm{x-\mu_\theta(z_1,z_2)}^2}{\nu\sigma_x^2}\right)^{-\frac{\nu+m_1+m_2+n}{2}}\\
&= C_{\nu,m_1+m_2+n} \sigma_z^{-m_2}\sigma_x^{-n} \left(1+\frac{\norm{z_1}^2}{\nu} +\frac{\norm{z_2-\zeta_\theta(z_1)}^2}{\nu\sigma_z^2} +\frac{\norm{x-\mu_\theta(z_1,z_2)}^2}{\nu\sigma_x^2}\right)^\frac{1}{\gamma}.
\end{align*}

We proceed to derive the $\gamma$-power divergence between $p_{\theta, \nu}(x,z_1,z_2)$ and the data distribution $q_{\phi,\nu}(x,z_1,z_2) = q_{\phi,\nu}(z_2|x,z_1)q_{\phi,\nu}(z_1|x)p_{\data}(x)$ analogously to Appendix \ref{appendixloss}. 

For the $\gamma$-entropy of the joint data distribution, Proposition \ref{propfirstorder} needs to be applied twice, first with respect to $p_{\data}(x)$ then with respect to $q_{\phi,\nu}(z_1|x)$:
\begin{align*}
&\hgamma{q_{\phi,\nu}} = -\left(\iiint q_{\phi,\nu}(x,z_1,z_2)^{1+\gamma} dz_2dz_1dx\right)^\frac{1}{1+\gamma} \\
&= -\left(\iint \left(\int q_{\phi,\nu}(z_2|x,z_1)^{1+\gamma} dz_2\right) q_{\phi,\nu}(z_1|x)^{1+\gamma} p_{\data}(x)^{1+\gamma} dz_1dx \right)^\frac{1}{1+\gamma}\\
&= - \underbrace{C_{\nu+m_1+n,m_2}^\frac{\gamma}{1+\gamma} \left(1+\frac{m_1+n}{\nu}\right)^\frac{\gamma m_2}{2(1+\gamma)} \left(1+\frac{m_2}{\nu+m_1+n-2}\right)^\frac{1}{1+\gamma}}_{=: \widetilde{C}_1}\\
&\qquad\times \left(\iint \deter{\Sigma_\phi(x,z_1)}^{-\frac{\gamma}{2}} q_{\phi,\nu}(z_1|x)^{1+\gamma} p_{\data}(x)^{1+\gamma} dz_1dx \right)^\frac{1}{1+\gamma}\\
&\approx -\widetilde{C}_1 \left(\int\left(\int\deter{\Sigma_\phi(x,z_1)}^{-\frac{\gamma}{2}} q_{\phi,\nu}(z_1|x)^{1+\gamma} dz_1\right)^\frac{1}{1+\gamma} p_{\data}(x)dx - H(p_{\data}) \cdot\gamma\right)\\
&\approx -\widetilde{C}_1 \int \left(\int\deter{\Sigma_\phi(x,z_1)}^{-\frac{\gamma}{2(1+\gamma)}} q_{\phi,\nu}(z_1|x) dz_1 -H(q_{\phi,\nu}(\cdot|x))\cdot\gamma\right) p_{\data}(x)dx +\text{const.}\\
&= -\widetilde{C}_1 \mathbb{E}_{x\sim p_{\data}} \bigg\{\expect{z_1\sim q_{\phi,\nu}(\cdot|x)}{\deter{\Sigma_\phi(x,z_1)}^{-\frac{\gamma}{2(1+\gamma)}}} -\frac{\gamma}{2} \log\deter{\Lambda_\phi(x)}\bigg\} +\text{const.}
\end{align*}
Note that a new order $O(\gamma)$ correction term $\log\deter{\Lambda_\phi(x)}$ appears in the inner approximation due to the dependency of the differential entropy of $q_{\phi,\nu}(\cdot|x)$ on $x$.

Next, we evaluate the required integrals in the $\gamma$-power cross-entropy,
\begin{align*}
&\iiint p_{\theta,\nu}(x,z_1,z_2)^{1+\gamma} dxdz_2dz_1 =\mathbb{E}_{z_1 \sim p_{Z,\nu}} \eexpect{z_2 \sim p_{\theta,\nu}(\cdot|z_1)}{x\sim p_{\theta,\nu}(\cdot|z_1, z_2)}{p_{\theta,\nu}(x,z_1,z_2)^\gamma}\\
&= C_{\nu,m_1+m_2+n}^\gamma \sigma_z^{-\gamma m_2}\sigma_x^{-\gamma n} \mathbb{E}_{z_1\sim p_{Z,\nu}} \mathbb{E}_{z_2 \sim p_{\theta,\nu}(\cdot|z_1)} \Bigg[ 1+\frac{\norm{z_1}^2}{\nu} +\frac{\norm{z_2-\zeta_\theta(z_1)}^2}{\nu\sigma_z^2}\\
&\qquad + \frac{1}{\nu\sigma_x^2} \tr\left( \frac{\nu+m_1+m_2}{\nu+m_1+m_2-2} \cdot \frac{1 + \nu^{-1}\norm{z_1}^2+\nu^{-1}\sigma_z^{-2}\norm{z_2-\zeta_\theta(z_1)}^2}{1+\nu^{-1}(m_1+m_2)} \sigma_x^2 I\right)\Bigg] \\
&= C_{\nu,m_1+m_2+n}^\gamma \sigma_z^{-\gamma m_2}\sigma_x^{-\gamma n} \frac{\nu+m_1+m_2+n-2}{\nu+m_1+m_2-2} \\
&\qquad\times \eexpect{z_1\sim p_{Z,\nu}}{z_2 \sim p_{\theta,\nu}(\cdot|z_1)}{1+\frac{\norm{z_1}^2}{\nu} +\frac{\norm{z_2-\zeta_\theta(z_1)}^2}{\nu \sigma_z^2}}\\
&= C_{\nu,m_1+m_2+n}^\gamma \sigma_z^{-\gamma m_2}\sigma_x^{-\gamma n}\left(1+\frac{m_1+m_2+n}{\nu-2}\right).
\end{align*}
Furthermore,
\begin{align*}
&\iiint q_{\phi,\nu}(x,z_1,z_2) p_{\theta,\nu}(x,z_1,z_2)^\gamma dz_2dz_1dx \\
&= \mathbb{E}_{x\sim p_{\data}} \eexpect{z_1\sim q_{\phi,\nu}(\cdot|x)}{z_2\sim q_{\phi,\nu}(\cdot|x,z_1)}{p_{\theta,\nu}(x,z_1,z_2)^\gamma}\\
&= C_{\nu,m_1+m_2+n}^\gamma \sigma_z^{-\gamma m_2}\sigma_x^{-\gamma n}\\
&\quad\times \mathbb{E}_{x\sim p_{\data}} \eexpect{z_1\sim q_{\phi,\nu}(\cdot|x)}{z_2\sim q_{\phi,\nu}(\cdot|x,z_1)}{1+\frac{\norm{z_1}^2}{\nu} +\frac{\norm{z_2-\zeta_\theta(z_1)}^2}{\nu\sigma_z^2} +\frac{\norm{x-\mu_\theta(z_1,z_2)}^2}{\nu\sigma_x^2}}\\
&= C_{\nu,m_1+m_2+n}^\gamma \sigma_z^{-\gamma m_2}\sigma_x^{-\gamma n} \mathbb{E}_{x\sim p_{\data}} \Bigg\{ 1+\frac{1}{\nu}\left(\norm{\zeta_\phi(x)}^2 + \frac{\nu}{\nu+n-2}\tr\Lambda_\phi(x) \right)\\
&\qquad +\frac{1}{\nu\sigma_z^2} \expect{z_1\sim q_{\phi,\nu}(\cdot|x)}{\lVert \mu_\phi(x,z_1)-\zeta_\theta(z_1)\rVert^2 +\frac{\nu}{\nu+m_1+n-2} \tr\Sigma_\phi(x,z_1)} \\
&\qquad +\frac{1}{\nu\sigma_x^2} \mathbb{E}_{z_1\sim q_{\phi,\nu}(\cdot|x)}\mathbb{E}_{z_2\sim q_{\phi,\nu}(\cdot|x,z_1)} \norm{x-\mu_\theta(z_1,z_2)}^2 \Bigg\}.
\end{align*}
Therefore, the $\gamma$-power cross-entropy is obtained as
\begin{align*}
&\cgamma{q_{\phi,\nu}}{p_{\theta,\nu}} \\&= \left(\iiint q_{\phi,\nu}(x,z_1,z_2) p_{\theta,\nu}(x,z_1,z_2)^\gamma dz_2dz_1dx\right) \left(\iiint p_{\theta,\nu}(x,z_1,z_2)^{1+\gamma} dxdz_2dz_1\right)^{-\frac{\gamma}{1+\gamma}}\\
&= - \underbrace{\left(C_{\nu,m_1+m_2+n}^\gamma \sigma_z^{-\gamma m_2}\sigma_x^{-\gamma n} \right)^\frac{1}{1+\gamma}
\left(1+\frac{m_1+m_2+n}{\nu-2}\right)^{-\frac{\gamma}{1+\gamma}}}_{=:\,\widetilde{C}_2}\\
&\qquad\times \mathbb{E}_{x\sim p_{\data}} \Bigg\{ 1+\frac{1}{\nu}\left(\norm{\zeta_\phi(x)}^2 + \frac{\nu}{\nu+n-2}\tr\Lambda_\phi(x) \right)\\
&\qquad +\frac{1}{\nu\sigma_z^2} \expect{z_1\sim q_{\phi,\nu}(\cdot|x)}{\lVert \mu_\phi(x,z_1)-\zeta_\theta(z_1)\rVert^2 +\frac{\nu}{\nu+m_1+n-2} \tr\Sigma_\phi(x,z_1)} \\
&\qquad +\frac{1}{\nu\sigma_x^2} \mathbb{E}_{z_1\sim q_{\phi,\nu}(\cdot|x)}\mathbb{E}_{z_2\sim q_{\phi,\nu}(\cdot|x,z_1)} \norm{x-\mu_\theta(z_1,z_2)}^2 \Bigg\}.
\end{align*}
Putting everything together and tidying constants, we finally obtain the $\gamma$-loss for $t^3$HVAE,
\begin{equation}\label{losshier}
\begin{split}
&\widetilde{\mathcal{L}}_\gamma(\theta,\phi) = \frac{1}{2}\, \mathbb{E}_{x\sim p_{\data}} \Bigg\{\underbrace{\frac{1}{\sigma_x^2} \mathbb{E}_{z_1\sim q_{\phi,\nu}(\cdot|x)}\mathbb{E}_{z_2\sim q_{\phi,\nu}(\cdot|x,z_1)} \norm{x-\mu_\theta(z_1,z_2)}^2}_{\text{reconstruction error}}\\
& +\underbrace{\norm{\zeta_\phi(x)}^2+\frac{\nu}{\nu+n-2}\tr\Lambda_\phi(x) +\frac{\gamma\nu\widetilde{C}_1}{2\widetilde{C}_2} \log\deter{\Lambda_\phi(x)}}_{\text{regularizer for } q_{\phi,\nu}(z_1|x)}\\
& +\frac{1}{\sigma_z^2} \mathbb{E}_{z_1\sim q_{\phi,\nu}(\cdot|x)}\Bigg[\underbrace{\lVert \mu_\phi(x,z_1)-\zeta_\theta(z_1)\rVert^2 +\frac{\nu\cdot \tr\Sigma_\phi(x,z_1)}{\nu+m_1+n-2} -\frac{\nu\sigma_z^2\widetilde{C}_1}{\widetilde{C}_2} \deter{\Sigma_\phi(x,z_1)}^{-\frac{\gamma}{2(1+\gamma)}}}_{\text{regularizer for } q_{\phi,\nu}(z_2|x,z_1)}\Bigg]\Bigg\}.
\end{split}
\end{equation}
Similarly to the $\gamma$-loss, we see that $\widetilde{\mathcal{L}}_\gamma(\theta,\phi)$ consists of the reconstruction error and a sum of regularizing terms for each of the first and second level encoders. It is also possible to replace the term $\log\deter{\Lambda_\phi(x)}$ with a constant times $-\deter{\Lambda_\phi(x)}^{-\frac{\gamma}{2(1+\gamma)}}$ for consistency by converting the order $O(\gamma)$ correction term in Proposition \ref{propfirstorder} from differential entropy to $\gamma$-power entropy.

\endgroup 

\section{Experimental Details}
\label{appendixexperiments}
All codes are implemented via Python 3.8.10 with the PyTorch package \citep{paszke2019pytorch} version 1.13.1+cu117, and run on Linux Ubuntu 20.04 with Intel\textsuperscript{\tiny\textregistered} Xeon\textsuperscript{\tiny\textregistered} Silver 4114 @ 2.20GHz processors, an Nvidia Titan V GPU with 12GB memory, CUDA 11.3 and cuDNN 8.2.

\subsection{Reparametrization Trick for $t^3$VAE}\label{expimplementations}

Recall that for the Gaussian VAE, given $N$ data points $x^{(1)},\cdots,x^{(N)}$, a Monte Carlo estimate of the observed ELBO is computed using $L$ samples $z^{(i,1)},\cdots,z^{(i,L)}$ drawn from $q_\phi(\cdot|x^{(i)})$,
\begin{equation*}
\widehat{\mathcal{L}}(\theta,\phi)= \frac{1}{N}\sum_{i=1}^N \left[ \frac{1}{L}\sum_{\ell=1}^L \log p_\theta(x^{(i)}|z^{(i,\ell)})- \dkl{q_{\phi}(z|x^{(i)})}{p_Z(z)} \right].
\end{equation*}
In order to backpropagate gradients through stochastic nodes, the reparametrization trick must be used wherein $z^{(i,\ell)} = \mu_\phi(x^{(i)})+ \sigma_\phi(x^{(i)}) \odot \epsilon^{(i,\ell)}$ and $\epsilon^{(i,\ell)}$ is sampled from $\mathcal{N}_m(0,I)$. Since the t-distribution behaves similarly under affine transformations, this mechanism is simple to implement for our model. 

A multivariate t-distribution $T\sim t_d(\mu,\Sigma,\nu)$ may be constructed from a multivariate centered Gaussian $Z\sim \mathcal{N}_d(0,\Sigma)$ and an independent chi-squared variable $V\sim\chi^2(\nu)$ via
\begin{equation*}
T \overset{d}{=} \mu+\frac{Z}{\sqrt{V/\nu}}.
\end{equation*}

We use an encoder with diagonal covariance,
\begin{equation*}
q_{\phi, \nu} (z|x) =
    t_m \left(
        z \left|
        \mu_\phi(x), 
        \frac{1}{1 + \nu^{-1}n} \diag \sigma_\phi^2(x), 
        \nu+n
    \right.\right),
\end{equation*}
hence we may obtain $z^{(i,\ell)}$ by drawing $\epsilon^{(i,\ell)}\sim \mathcal{N}_m(0,I)$ and $\delta^{(i,\ell)} \sim \chi^2(\nu+n)$ independently and computing
\begin{equation}
\begin{aligned}\label{t-repara}
z^{(i,\ell)} &= \mu_\phi(x^{(i)}) + \frac{1}{\sqrt{\delta^{(i,\ell)}/(\nu+n)}} \frac{\sigma_\phi(x^{(i)})}{\sqrt{1+\nu^{-1}n}} \odot \epsilon^{(i,\ell)} \\
&= \mu_\phi(x^{(i)}) + \sqrt{\frac{\nu}{\delta^{(i,\ell)}}} \sigma_\phi(x^{(i)})\odot \epsilon^{(i,\ell)}.
\end{aligned}
\end{equation}



\subsection{Details on Synthetic Datasets}\label{modelarchi}

We first generate 200K train data, 200K validation data, and 500K test data from the heavy-tailed bimodal distribution \eqref{syntheticdist}. We then construct decoder and encoder networks with a common multi-layer perceptron architecture, which consists of two fully connected layers each with 64 nodes and Leaky ReLU activations. In the training process, we use a batch size of 128 and employ the Adam optimizer \citep{kingma2014adam} with a learning rate of $1 \times 10^{-3}$ and weight decay $1 \times 10^{-4}$. Moreover, we adapt early stopping using validation data with patience 15 to prevent overfitting. All models finished training within 80 epochs except VAE-st. 

After completing training, we generate 500K samples from each model. The large sample size is necessary in order to accurately plot the far tails. For MMD hypothesis testing, we employ the fast linear time bootstrap test \citep{gretton2012kernel} with 100K subsamples from each distribution and 1K repetitions. For the tail distributions, we employ the same test with appropriate subsampling to match sample sizes.

Moreover, we also compare the reconstruction loss of $t^3$VAE and Gaussian VAE in Table \ref{table:1d_MMDpvalue_full}. This comparison is valid since both the ELBO \eqref{ELBOdef2} and $\gamma$-loss \eqref{gammalossnew} optimize the same MSE loss with different regularizers. We observe that models with smaller $\nu$ exhibit smaller reconstruction loss, demonstrating the relaxation of regularization discussed in Section \ref{sectiongammaloss}. Smaller losses can of course be achieved by $\beta$-VAE by decreasing $\beta$, which is close to a simple autoencoder.

The empty blocks for $\beta$-VAE ($\beta=2.0$) and VAE-st ($\nu = 15.0$) are due to no tail samples being generated at all. Also, the reconstruction loss of Student-$t$ VAE cannot be numerically compared as it assumes learnable variance $\sigma^2$.

\begin{table}
\caption{Full results of MMD test $p$-values of 1-dimensional synthetic data analysis. Rejected values are shown in red.}
\vspace{3mm}
\centering
\begin{tabular}{llrrrrr}
    \toprule[1.5pt]
    \multirow{2}{*}{Model} & 
    \multirow{2}{*}{Parameter} & 
    \multicolumn{3}{c}{MMD test $p$-values} &
    \multirow{2}{*}{Recon loss}
    \\[1mm]
    & & Full & Left tail & Right tail &
    \\
    \midrule[1pt]
    \multirow{5}{*}{$t^3$VAE} 
    & $\nu = 9.0$  & $\textcolor{red}{<0.001}$ & $0.130$ & $0.580$ & $0.707$ \\
    & $\nu = 12.0$ & $\textcolor{red}{0.026}$ & $0.330$ & $0.347$ & $0.744$ \\
    & $\nu = 15.0$ & $\textcolor{red}{0.008}$ & $0.363$ & $0.622$ & $0.831$ \\
    & $\nu = 18.0$ & $0.322$ & $0.377$ & $0.693$ & $0.847$ \\
    & $\nu = 21.0$ & $0.372$ & $0.188$ & $0.838$ & $0.853$ \\
    \midrule[0.5pt]
    VAE & -         & $0.514$    & \textcolor{red}{$0.036$} & \textcolor{red}{$0.003$} & 1.028 \\
    \midrule[0.5pt]
    \multirow{4}{*}{$\beta$-VAE} 
    & $\beta = 0.1$ & $0.614$    & \textcolor{red}{$0.011$} & \textcolor{red}{$<0.001$} & $0.102$ \\
    & $\beta = 0.2$ & $0.137$    & \textcolor{red}{$<0.001$} & \textcolor{red}{$0.032$} & $0.210$ \\
    & $\beta = 0.5$ & $0.050$    & \textcolor{red}{$0.007$} & \textcolor{red}{$0.027$}  & $0.506$ \\
    & $\beta = 2.0$ & \textcolor{red}{$<0.001$} & - & - & - \\
    \midrule[0.5pt]
    Student $t$-VAE & - & $0.587$ & $0.291$ & $0.114$ & - \\
    \midrule[0.5pt]
    \multirow{5}{*}{DE-VAE} 
    & $\nu = 9.0$  & $0.943$ & $0.424$ & $0.814$ & $1.012$ \\
    & $\nu = 12.0$ & $0.068$ & $0.222$ & $0.406$ & $1.018$ \\
    & $\nu = 15.0$ & $0.481$ & \textcolor{red}{$0.046$} & $0.443$ & $0.997$ \\
    & $\nu = 18.0$ & $0.763$ & $0.219$ & $0.411$ & $0.984$ \\
    & $\nu = 21.0$ & $0.597$ & $0.376$ & $0.146$ & $0.985$ \\
    \midrule[0.5pt]
    \multirow{5}{*}{VAE-st} 
    & $\nu = 9.0$ & $\textcolor{red}{<0.001}$ &  $0.179$  & $0.460$ & $1.062$ \\
    & $\nu = 12.0$ & $\textcolor{red}{<0.001}$ & $0.953$ & $0.643$ & $1.057$ \\
    & $\nu = 15.0$ & $\textcolor{red}{<0.001}$ & $0.363$ & - & $1.045$ \\
    & $\nu = 18.0$ & $\textcolor{red}{<0.001}$ & $0.925$ & \textcolor{red}{$<0.001$} & $0.981$ \\
    & $\nu = 21.0$ & $\textcolor{red}{<0.001}$ & $0.326$ & \textcolor{red}{$<0.001$} & $1.094$ \\
    \bottomrule[1.5pt]
\end{tabular}
\label{table:1d_MMDpvalue_full}
\end{table}

\begin{table}
\caption{Full results of MMD test $p$-values of 2-dimensional synthetic data analysis. Rejected values are shown in red.}
\vspace{3mm}
\centering
\begin{tabular}{llrrrr}
    \toprule[1.5pt]
    \multirow{2}{*}{Model} & 
    \multirow{2}{*}{Parameter} & 
    \multicolumn{3}{c}{MMD test $p$-values}
    \\[1mm]
    & & Full & Left tail & Right tail &
    \\
    \midrule[1pt]
    \multirow{5}{*}{$t^3$VAE} 
    & $\nu = 30.0$ & $0.276$ & $0.214$ & $0.213$ \\
    & $\nu = 40.0$ & $0.716$ & $0.130$ & \textcolor{red}{$<0.001$}\\
    & $\nu = 50.0$ & $0.766$ & $0.142$ & \textcolor{red}{$0.004$} \\
    & $\nu = 60.0$ & $0.665$ & $0.060$ & \textcolor{red}{$0.002$} \\
    & $\nu = 70.0$ & $0.773$ & $0.100$ & \textcolor{red}{$0.002$}\\
    \midrule[0.5pt]
    VAE & -         & $0.116$ & \textcolor{red}{$0.004$} & \textcolor{red}{$<0.001$} \\
    \midrule[0.5pt]
    \multirow{3}{*}{$\beta$-VAE} 
    & $\beta = 0.1$ & $0.631$ & \textcolor{red}{$<0.001$} & \textcolor{red}{$<0.001$}  \\
    & $\beta = 0.2$ & $0.359$ & \textcolor{red}{$<0.001$} & \textcolor{red}{$<0.001$} \\
    & $\beta = 0.5$ & $0.251$ & \textcolor{red}{$<0.001$}  & \textcolor{red}{$0.015$}\\
    \midrule[0.5pt]
    Student $t$-VAE &
    -               & $0.530$ & \textcolor{red}{$<0.001$} & \textcolor{red}{$<0.001$} \\
    \midrule[0.5pt]
    \multirow{2}{*}{DE-VAE} 
    & $\nu = 3.0$  & $0.624$ & \textcolor{red}{$<0.002$} & $0.057$ \\
    & $\nu = 5.0$ & $0.672$ & \textcolor{red}{$<0.001$} & \textcolor{red}{$<0.001$} \\
    & $\nu = 7.0$  & $0.452$ & \textcolor{red}{$<0.001$} & \textcolor{red}{$<0.001$} \\
    & $\nu = 9.0$ & $0.539$ & \textcolor{red}{$<0.001$} & \textcolor{red}{$<0.001$} \\
    \midrule[0.5pt]
    \multirow{2}{*}{VAE-st} 
    & $\nu = 3.0$  & $0.485$ &  \textcolor{red}{$<0.001$} & \textcolor{red}{$<0.001$} \\
    & $\nu = 5.0$ & $0.092$ & \textcolor{red}{$<0.001$} & \textcolor{red}{$0.020$} \\
    & $\nu = 7.0$  & $0.250$ &  \textcolor{red}{$<0.001$} & \textcolor{red}{$<0.001$} \\
    & $\nu = 9.0$ & $0.156$ & \textcolor{red}{$<0.001$} & \textcolor{red}{$0.020$} \\
    \bottomrule[1.5pt]
\end{tabular}
\label{table:2d_MMDpvalue_full}
\end{table}

\subsection{Details on CelebA and CIFAR100-LT}\label{dataset}

The original CIFAR100 dataset is provided by \citet{Krizhevsky09learningmultiple}. CIFAR100 consists of 60K $32\times 32$ color images with 100 balanced classes, divided into 50K train images and 10K test images. The CelebA dataset is provided by \citet{liu2015faceattributes, liu2018large}. The dataset consists of 202,599 face images from 10,177 celebrities, which are split into 182,637 training and 19,962 test images following the recommendation provided in the CelebA official documentation. Each image is subjected to uniform center-cropping, specifically by 148$\times$148 pixels, followed by downsizing to 64$\times$64 pixels.

All VAE models are trained for 50 epochs using a batch size of 128 and a latent variable dimension of 64 with the Adam optimizer. We tune learning rates separately for each model within the range of $5 \times 10^{-5}\sim 5 \times 10^{-4}$ and present the best result. In particular, good results are obtained when the learning rate is set to $8 \times 10^{-5} \sim 1 \times 10^{-4}$ in CelebA, and $2 \times 10^{-4} \sim 4 \times 10^{-4}$ in CIFAR100-LT. In the case of FactorVAE, a separate Adam optimizer is used for the discriminator with a learning rate of $1\times 10^{-5}$. 

The basic encoder and decoder architectures for most models are based on the public GitHub repository \citep{Subramanian2020} with modifications. The encoder network consists of 5 convolutional layers with hidden layer sizes of 128, 256, 512, 1024, 2048. Batch normalization and ReLU activation are applied after each convolutional layer. Two fully connected layers for $\mu_{\phi}(x)$ and $\Sigma_{\phi}(x)$ then flatten the encoder output to the latent space. The decoder network is the reverse of the encoder architecture, with 5 transposed convolutional layers instead. The final part of the decoder is a pair of transposed convolutional layers and a tanh activation.

Hierarchical models, HVAE and $t^3$HVAE, maintain the encoder and decoder networks of the original model while adding a second encoder $q_{\phi,\nu}(z_2|x,z_1)$ and a layer that estimates the mean of the second prior $\zeta_{\theta}(z_1)$  using a multi-layer perceptron. The dimension of $z_2$ is set to half that of $z_1$, and skip connections are implemented by concatenating the connected input with the original input. After training, we load the best model of all epochs and evaluate FID scores for the whole dataset and each class separately.

\paragraph{The complete version of Table \ref{table:FIDscores}.}

In Tables \ref{table:CelebAFull} and \ref{table:CIFAR100full}, we provide the complete version of Table \ref{table:FIDscores} including the FID scores with varying hyperparameters. We verify that $t^3$VAE achieves the lowest FID scores, regardless of $\nu$. In addition, we include the FID scores of Hierarchical model at the bottom of Table \ref{table:FIDscores}, emphasizing that $t^3$HVAE consistently yields more favorable outcomes compared to the conventional model.

\begin{table}[!ht]
    \caption{CelebA FID scores for overall and selected attribute classes with varying hyperparameters for each model, including Pale Skin (Pale) and Double Chin (Chin). We also include the Gaussian HVAE and $t^3$HVAE reconstruction scores.}
    \centering
    \resizebox{\linewidth}{!}{
    \begin{tabular}{l|l|r|rrrrr|rr}
    \toprule[1.5pt]
    \multicolumn{1}{l}{Framework}& \multicolumn{1}{l}{Parameter} &\multicolumn{1}{r}{Overall} & \multicolumn{1}{r}{Bald} & Mst & Gray & Pale & \multicolumn{1}{r}{Chin} &No Bd&Young\\ \midrule[1pt]
    \multirow{4}{*}{$t^3$VAE} &$\nu = 10$& $39.4$ &$66.5$ & $61.5$ & $67.2$ & $54.6$ & $57.2$ & $40.1$ & $46.7$\\
    &$\nu = 5$& $40.2$ & $67.5$ & $62.6$ & $68.8$& $55.1$& $58.1$& $40.8$ & $47.8$\\
    &$\nu = 2.5$& $40.0$ & $67.6$ & $62.0$ & $68.0$ & $55.8$ & $57.7$ & $40.7$ & $47.2$\\
    &$\nu = 2.1$& $40.4$ & $68.5$ & $62.5$ & $68.0$ & $55.8$ & $58.4$ & $41.0$ & $47.6$\\
    \midrule[0.5pt]
    \multirow{2}{*}{VAE} & $\kappa = 1$ &$57.9$ & $85.8$ & $79.7$ & $91.0$ & $70.8$ & $78.2$ & $58.4$ & $69.1$\\
     & $\kappa = 1.5$ & $73.2$ & $105.3$ & $96.4$ & $114.5$ & $85.3$ & $97.5$ & $73.8 $ & $84.1$ \\ 
    \midrule[0.5pt]
    \multirow{3}{*}{$\beta$-VAE} & $\beta = 0.5$& $50.9$ & $81.0$& $72.4$ & $84.3$ & $66.0$ & $71.6$ & $51.4$ & $60.1$\\
    & $\beta = 0.25$& $46.7$ & $76.5$ &$69.3$ & $79.1$ &$60.7$ &$66.9$ &$46.9$ & $55.7$\\
    & $\beta = 0.1$& $42.1$ & $72.1$ & $65.3$ & $74.8$ & $56.1$ & $62.4$ & $42.5$& $51.3$\\
    & $\beta = 0.05$& $40.4$ & $69.3$ & $62.7$ & $71.1$ & $55.1$ & $59.8$ & $40.9$& $49.0$\\
    
    \midrule[0.5pt]
    Student-$t$ VAE  & - & $78.4$ & $112.0$ & $104.2$ & $118.7$ & $91.7$ & $100.7$ & $78.6$ & $88.9$\\
    \midrule[0.5pt]
    \multirow{4}{*}{DE-VAE} & $\nu=10$ & $59.7$ & $89.2$ &$83.8$ & $95.6$ & $73.8$ & $81.4$ & $59.9$ & $69.4$ \\
    &$\nu=5$ & $58.9$ & $89.6$ &$84.3$ & $94.9$ & $72.8$ & $80.2$ & $59.1$ & $68.9$\\
    &$\nu=2.5$ & $59.0$ & $84.6$ & $81.3$ & $93.1$ & $72.1$ & $78.2$ & $59.4$ & $68.1$ \\
    &$\nu=2.1$ &$59.1$ & $90.5$ & $83.7$ & $94.5$ & $72.9$ & $80.9$ & $59.4$ & $69.1$\\
    \midrule[0.5pt]
    \multirow{4}{*}{Tilted VAE} &$\tau=20$ &  $49.5$ & $79.8$ & $73.3$ & $82.1$ & $63.8$ & $70.3$ & $49.9$ & $59.3$\\
    &$\tau=30$ & $45.9$ & $75.1$ & $69.5$ & $77.1$ & $60.6$ & $65.4$ & $46.3$ & $54.7$ \\
    &$\tau=40$ & $45.0$ & $74.4$ & $68.1$ & $76.6$ & $59.9$ & $65.1$ & $45.2$ & $54.1$ \\
    &$\tau=50$ & $42.6$ & $73.0$ & $65.4$ & $73.7$ & $57.8$ & $62.1$ & $42.9$ & $50.8$\\
    \midrule[0.5pt]
    \multirow{4}{*}{FactorVAE} &$\gamma_{\text{tc}} = 20$ & $61.2$ & $96.7$ & $88.7$ & $100.1$ & $77.1$ & $84.8$ & $61.3$ & $71.8$\\
    &$\gamma_{\text{tc}} = 10$ & $60.8$ & $95.4$ & $87.2$ & $101.2$ & $75.6$ & $85.5$ & $60.8$ & $72.4$ \\
    &$\gamma_{\text{tc}} = 5$ & $59.8$ & $91.7$ & $85.7$ & $95.2$ & $74.2$ & $81.5$ & $59.9$ & $70.0$ \\
    &$\gamma_{\text{tc}} = 2.5$ & $60.8$ & $92.5$& $86.6$ & $96.8$& $73.6$& $82.5$ & $60.8$ & $71.1$ \\
    
    \midrule[1pt]
    HVAE & - & $56.9$ & $85.4$& $78.9$& $88.1$& $68.0$& $89.1$ & $56.7$ & $67.2$  \\
    \midrule[0.5pt]
    \multirow{3}{*}{$t^3$HVAE} & $\nu=10$ & $37.5$ & $65.9$& $59.6$ & $67.7$& $52.8$& $56.1$  & $37.8$  & $45.2$ \\ 
    & $\nu=5$ & $37.0$ & $65.4$ & $59.3$ & $67.2$ &$52.6$& $55.9$ & $37.4$& $44.5$ \\
    & $\nu=2.5$ & $36.7$& $64.6$& $58.6$ & $66.1$& $52.5$& $54.9$ & $37.1$ & $43.8$ \\
    \bottomrule[1.5pt]
    \end{tabular}}
    \label{table:CelebAFull}
\end{table}

\begin{table}[!ht]
\centering
\caption{FID scores (Total data and lower 10\% of tailed class) on CIFAR100-LT with varying imbalance factor $\rho$.}
\resizebox{\linewidth}{!}{
\begin{tabular}{l|l|ll|ll|ll|ll}
\toprule[1.5pt]
\multicolumn{1}{c}{\multirow{2}{*}{Framework}} & \multicolumn{1}{c}{\multirow{2}{*}{Parameter}}&  \multicolumn{2}{c}{$\rho = 1$}& \multicolumn{2}{c}{$\rho = 10$} & \multicolumn{2}{c}{$\rho = 50$}& \multicolumn{2}{c}{$\rho = 100$} \\ \cmidrule{3-10}
\multicolumn{1}{c}{}&\multicolumn{1}{c}{} &\multicolumn{1}{c}{Total}& \multicolumn{1}{c}{Rare}&\multicolumn{1}{c}{Total}& \multicolumn{1}{c}{Rare}&\multicolumn{1}{c}{Total}& \multicolumn{1}{c}{Rare}&\multicolumn{1}{c}{Total}& \multicolumn{1}{c}{Rare} \\ 
\midrule[1pt]
 \multirow{4}{*}{$t^3$VAE} & $\nu = 2.1$ & $100.0$&$142.2$ & $101.2$&$146.2$ & $123.5$&$166.2$ & $130.2$&$170.8$   \\ 
 & $\nu = 2.5$ & $98.5$&$140.9$ & $100.4$&$144.8$ &$128.5$&$170.7$ & $130.0$&$172.1$ \\ 
 & $\nu = 5$ &$ 100.2$&$141.8$ &$ 103.8$&$147.7$&$126.8$&$171.3$&$131.0$&$173.7$ \\ 
 & $\nu = 10$ &  $97.5$&$140.9$&$102.8$&$147.6$&$128.5$&$170.8$ &$128.7$&$170.9$ \\ \midrule[0.5pt] 
 \multirow{2}{*}{VAE} & $\kappa = 1$ & $256.1$&$305.7$
    &$267.2$ & $315.5$ & $277.4$ & $332.7$ & $287.3$&$342.3$\\
 & $\kappa = 1.5$ & $274.2$&$298.9$& $290.5$&$313.1$&$296.7$&$318.9$ & $297.7$ & $319.6$\\
 \midrule[0.5pt]
 \multirow{3}{*}{$\beta$-VAE} & $\beta=0.5$ & $179.5$ & $210.2$& $198.2$&$230.8$ & $214.6$ & $245.8$ & $223.7$ & $252.9$ \\
 & $\beta=0.25$ & $142.1$ & $178.0$ & $160.0$ & $197.7$ & $169.2$ & $204.5$ & $182.5$ & $217.5$\\
 & $\beta=0.1$ & $114.0$&$154.7$ &$130.4$&$170.8$ & $138.5$&$178.4$ & $160.6$&$199.0$\\  \midrule[0.5pt]
 Student-$t$ VAE & -- &
 $259.5$ & $315.8$ & $314.1$&$353.1$& $323.7$&$364.1$ & $333.4$&$374.3$\\ 
 \midrule[0.5pt]
 \multirow{4}{*}{DE-VAE} & $\nu = 2.1$ & $230.4$&$258.5$ & $250.0$&$280.0$ & $255.0$&$282.7$& $254.4$&$285.6$\\ 
 & $\nu = 2.5$ & $219.4$&$250.0$ & $250.2$&$281.6$& $256.7$&$285.1$&$258.5$&$303.2$\\ 
  & $\nu = 5$ & $232.6$&$260.9$ & $252.9$&$284.6$ &$252.0$&$281.3$ & $269.7$&$306.7$\\
 & $\nu = 10$  &$230.9$&$262.0$ &$250.5$&$278.0$&$258.0$&$283.6$&$272.4$&$303.5$\\
 \midrule[0.5pt]
 \multirow{4}{*}{Tilted VAE} & $\tau = 20$ & $113.8$&$154.1$ & $131.2$&$172.6$ & $149.6$&$189.5$ & $181.8$&$221.8$\\
 & $\tau = 30$ & $104.4$&$146.1$ & $124.1$&$166.5$ &$143.3$&$184.5$ & $179.1$&$220.6$ \\ 
 & $\tau = 40$ &  $101.0$&$142.5$& $123.9$&$167.4$ & $179.1$&$220.6$ & $172.4$&$223.6$ \\ 
 & $\tau = 50$ & $101.0$&$142.8$& $126.1$&$168.4$ & $147.0$&$187.8$ & $193.2$&$229.7$\\   
 \midrule[0.5pt]
 \multirow{4}{*}{FactorVAE} & $\gamma_{tc} = 20$ & $238.8$&$266.2$ & $277.6$&$322.8$ & $273.8$&$301.8$ & $269.1$&$297.8$\\ 
& $\gamma_{tc} = 10$   & $240.4$&$268.6$ & $273.5$&$324.3$ & $270.5$&$298.1$ & $270.2$&$298.2$\\ 
 & $\gamma_{tc} = 5$  & $232.3$&$263.3$ & $272.5$&$323.6$ & $275.6$&$306.3$ & $270.1$&$296.7$\\ 
 & $\gamma_{tc} = 2.5$  & $236.0$&$264.4$&$275.7$&$328.9$ & $269.2$&$298.2$ & $269.8$&$297.8$\\ 
 
\bottomrule[1.5pt]
\end{tabular}
}
\label{table:CIFAR100full}
\end{table}
\begin{figure}[!ht]
\begin{minipage}[t]{0.42\textwidth}
    \vspace{-2pt}
    \centering  
\includegraphics[width=\linewidth]{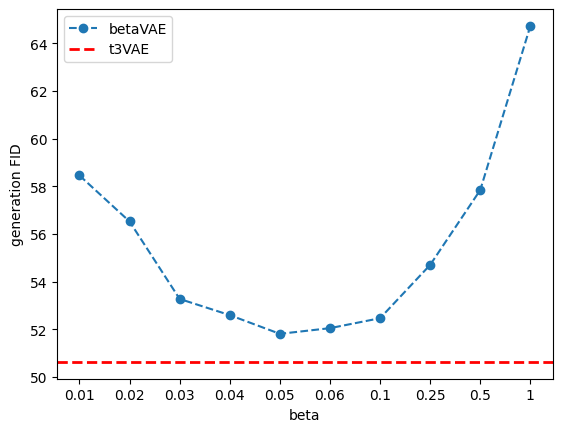}
\caption{Generation FID scores against $\beta$ values for $\beta$-VAE.}
\label{fig:betaVAE_genaspect}
\end{minipage}
\quad
\begin{minipage}[t]{0.55\textwidth}
    \vspace{0pt}
    \centering
    \resizebox{\textwidth}{!}{\begin{tabular}{l|l|r|l|l|r}
    \toprule[1.5pt]
    \multicolumn{1}{l}{Framework}& \multicolumn{1}{l}{Parameter} &\multicolumn{1}{r}{FID} & \multicolumn{1}{l}{Framework}& \multicolumn{1}{l}{Parameter} &\multicolumn{1}{r}{FID}\\ \midrule[0.5pt]
    \multirow{4}{*}{$t^3$VAE} &$\nu = 10$& $50.6$ & \multirow{4}{*}{DE-VAE} &$\nu=10$ & $62.8$ \\
    &$\nu = 5$& $50.9$ & & $\nu = 5$ & $61.6$\\
    &$\nu = 2.5$& $50.6$&&$\nu = 2.5$ & $58.9$\\
    &$\nu = 2.1$&$50.7$ &&$\nu = 2.1$ & $60.4$\\
    \midrule[0.5pt]
   \multirow{4}{*}{$\beta$-VAE} & $\beta = 0.5$& $57.8$ &\multirow{4}{*}{Tilted VAE} & $\tau=20$ & $58.4$\\
    &  $\beta = 0.25$& $54.7$ &&$\tau=30$ & $59.2$\\ 
    &  $\beta = 0.1$& $52.4$  &&$\tau=40$ & $60.8$\\
    & $\beta = 0.05$&$51.8$  &&$\tau=50$ &$61.1$\\ 
    \midrule[0.5pt]
    \multirow{2}{*}{VAE} & $\kappa = 1$ &$79.6$  &\multirow{4}{*}{FactorVAE}&$\gamma_{\text{tc}} = 2.5$& $67.1$\\
    & $\kappa = 1.5$ &$64.7$&&$\gamma_{\text{tc}} = 5$& $68.0$\\ 
    \multirow{2}{*}{Student-$t$ VAE} & \multirow{2}{*}{--} & \multirow{2}{*}{$82.3$} &&$\gamma_{\text{tc}} = 10$& $73.1$ \\
     &  & &&$\gamma_{\text{tc}} = 20$ & $78.6$\\
    \bottomrule[1.5pt]
    \end{tabular}}
    \label{table:gen_CelebAFull}
\captionof{table}{CelebA FID scores for overall classes with varying hyperparameters for each model.}
\end{minipage} 
\end{figure}

\paragraph{Comparing $\beta$-VAE and $t^3$VAE.}
For more accurate comparisons between $\beta$-VAE and $t^3$VAE in CelebA generation, we vary $\beta$ within a range from 0.001 to 1. We again note that the performance of $t^3$VAE is not sensitive to $\nu$, so no fine-tuning is required. Figure \ref{fig:betaVAE_genaspect} shows that $\beta$-VAE performs best when $\beta$ is set to 0.05 and becomes less effective as $\beta$ deviates further from 0.05.

This aspect is also considered in the reconstruction experiment design. If $\beta$ is set to ever smaller values, the reconstruction results may appear to improve over other models. However, this is because $\beta$-VAE essentially regresses to a raw autoencoder as $\beta\to 0$, so that while reconstruction performance can be improved, generation performance will simultaneously deteriorate. We therefore do not tune for extremely small values of $\beta$.

\end{document}